\documentclass{article}

\usepackage[preprint]{neurips_2026}

\usepackage[T1]{fontenc}
\usepackage{times}
\usepackage{amsmath,amssymb,amsthm,mathtools}
\usepackage{booktabs,multirow}
\usepackage{graphicx}
\usepackage{xcolor}
\usepackage{enumitem}
\usepackage{tikz}
\usepackage{url}
\usepackage{float}
\usepackage{array}
\usepackage{hyperref}

\definecolor{wsmblue}{HTML}{4477AA}
\definecolor{qadorange}{HTML}{EE7733}
\definecolor{decaygreen}{HTML}{228833}
\definecolor{todoamber}{HTML}{996515}

\newcommand{\QAM}{\textsc{QAM}}
\newcommand{\WSM}{\textsc{WSM}}

\newcommand{\E}{\mathbb{E}}
\newcommand{\Cov}{\operatorname{Cov}}

\theoremstyle{plain}
\newtheorem{theorem}{Theorem}

\theoremstyle{definition}

\newtheorem{assumption}[theorem]{Assumption}
\theoremstyle{remark}
\newtheorem{remark}[theorem]{Remark}
\newcolumntype{L}[1]{>{\raggedright\arraybackslash}p{#1}}

\title{QAM: Quadratic-Accurate Checkpoint Merging\\via Sequential Consistency}

\author{
    \textbf{Shihao Wang$^{1,2}$, Rui Kong$^{2}$, Xinran Chen$^{2}$, Hui Wu$^{2}$, Qipeng Qian$^{3}$, Jinman Zhao$^{4}$,}\\
    \textbf{Jiashu  Zhao$^{5}$, Yuchen Li$^{2}$, Jimmy Huang$^{6}$, Dawei Yin$^{2}$} \\
    $^1$University of Wisconsin -- Madison \quad $^2$Baidu Inc. \quad $^3$University of Arizona\\
    $^4$University of Toronto \quad $^5$Wilfrid Laurier University \quad $^6$York University\\ 
    \texttt{yuchenli1230@gmail.com,yindawei@acm.org}
}

\begin{document}
\maketitle

\begin{abstract}
Saved checkpoints record states along a training trajectory, but generally
do not determine the updates at states that would be visited under a
different schedule. We study how accurately these
checkpoints can reconstruct the endpoint of a sequential reference with
prescribed update strengths.
Under a common local transition model, two checkpoint-index moment
conditions characterize all convex merges that agree with this reference
through second order. We then prove an information limit that for
nondegenerate profiles, no algorithm using only a fixed-length
gradient-descent (GD) history with step size $h$ can achieve $o(h^3)$
endpoint error uniformly over a fixed class of smooth, strongly convex
losses. The lower bound follows from two losses with identical GD
checkpoint histories but sequential reference endpoints separated by
$\Omega(h^3)$.
\textbf{Quadratic-Accurate Merging} (QAM) achieves a matching
uniform $O(h^3)$ endpoint error bound.
Its explicit coefficients also define the unique profile-dependent merge that exactly matches the sequential GD reference across
all fixed quadratic objectives.
Across two public Adam checkpoint trajectories (SmolLM3-3B and OpenEuroLLM-Prelude-9B), three windows
and three profiles per model, and 15 tasks, QAM shows mixed results
for short windows and broader advantages over
\textbf{Warmup-Stable and Merge} (WSM) for longer windows.
Matched-moment GSM8K diagnostics further show that local consistency
alone does not fully determine downstream scores.
These results characterize the reconstruction limits of saved histories,
provide a coefficient rule that attains the optimal rate, and assess
its practical utility.
\end{abstract}

\section{Introduction}
\label{sec:introduction}
Large language model pretraining often requires adjustments to the training plan, such as introducing new data, revising the data mixture, or extending the training budget. A learning-rate decay schedule fixed in advance makes it harder to accommodate these adjustments. In contrast, warmup-stable schedules offer greater flexibility by maintaining a constant learning rate after warmup.
Averaging checkpoints saved during the stable phase can improve
model quality without further optimization
\citep{izmailov2018averaging,sanyal2023early,ajroldi2025average}.
However, choosing the averaging coefficients raises a more
fundamental question:
what alternative training behavior can a single saved history reconstruct?
The history reveals updates at the recorded states,
but not the updates at states that a changed schedule would have visited.

Warmup-Stable and Merge (\WSM{})~\citep{tian2025wsm} gives an exact
tail-sum correspondence between checkpoint weights and strengths assigned
to recorded updates. Holding this profile fixed, we define a sequential
reference that recomputes each relaxed transition at its current state.
At checkpoint scale, a transition represents a complete saved interval.
This reference supplies a coefficient-design criterion while applying its
weights to a sparse Adam history remains a heuristic.

We first characterize the entire class of convex merges that agrees with
this reference through second order under a common local transition
model. We show that two index moments are necessary and sufficient. The base merge's
discrepancy coefficient is a sum of covariances between nested update
indicators. Cancelling it preserves the index mean and reduces the
variance by twice this sum. The resulting family generally contains
many coefficient vectors.

Our main limitation result concerns the information in the training trajectory itself. For any nondegenerate fixed profile, we construct two globally
smooth, strongly convex losses with exactly the same GD checkpoints,
checkpoint gradients, and checkpoint losses, but reference endpoints
separated by $\Omega(h^3)$. Hence no reconstruction algorithm, including
a nonlinear or state-dependent one, has uniformly smaller error on the
specified smooth class as $h\to0$. A second-order-consistent merge attains
the matching $O(h^3)$ upper bound.

Quadratic-Accurate Merging (\QAM{}) realizes this rate using the
distribution of a sum of independent Bernoulli variables with the
prescribed update strengths. Local quadratic models of pretraining loss
motivate an additional selection principle~\citep{li2025modelmerging} that exact matching across every quadratic objective uniquely fixes the
complete coefficients. Our characterization and information bound locate
the attainable local accuracy of this rule among all reconstructions from
the same observations. Moreover, QAM itself is simple to construct as it requires only a scalar convolution on the coefficients.

We evaluate its practical utility on two public 3B/9B Adam pretraining
trajectories, three windows and three profiles per history, and 15 tasks.
Compared with paired WSM, results are mixed at short windows and more
favorable at intermediate and long windows. The four intermediate- and long-window linear-profile QAM merges exceed the latest checkpoint in 45/60 task comparisons. A complementary
GSM8K diagnostic changes full coefficients while holding their first two
index moments fixed. Its score differences demonstrate the empirical
freedom left by second-order consistency. Our experiments evaluate QAM's practical performance, while our
theory establishes its optimal reconstruction rate under the
stated assumptions.
\section{Related Work}
\label{sec:related-work}

\paragraph{Checkpoint and weight averaging.}
Stochastic weight averaging combines models along a training
trajectory \citep{izmailov2018averaging}, while later work studies
denser or more diverse forms of weight averaging
\citep{cha2021swad,rame2022diwa}.
LAWA studies early averaging with high learning rates in LLM
pretraining \citep{sanyal2023early}, and
\citet{ajroldi2025average} examine when and where averaging helps.
Beyond same-trajectory averaging, model merging has been studied
through model soups, Fisher-weighted merging, task arithmetic,
interference-aware merging, and sparsified parameter updates
\citep{wortsman2022soups,matena2022fisher,ilharco2023task,
yadav2023ties,yu2024dare}; see \citet{yang2024merging} for a broader
survey.

\paragraph{Averaging and update schedules.}
\citet{sandler2023trajectories} connect iterate averaging with learning-rate
schedules through models of SGD trajectories, while
\citet{meterez2026anytime} study horizon-free schedules together with weight
averaging.
For stable-phase LLM pretraining, \citet{tian2025wsm} derive an exact
tail-sum correspondence between checkpoint weights and recorded-update
weights.
We start from this correspondence and account for the state dependence
introduced by sequentially recomputed transitions.

\paragraph{Quadratic models and training dynamics.}
\citet{li2025modelmerging} explain pretraining averaging through a
second-order loss expansion, Hessian geometry, and interactions between
checkpoint deviations. Quadratic models also clarify averaging and
learning-rate schedules~\citep{sandler2023trajectories}. This viewpoint
motivates our matching criterion. Notice that the resulting coefficients do not
require estimating the Hessian or fitting a transition to the history.

\section{Quadratic-Accurate Checkpoint Merging}
\label{sec:induced-dynamics}
\label{sec:QAM}

We hold an update profile fixed and ask for a merge consistent with
applying those update strengths sequentially. This distinguishes the
choice of a design criterion from the downstream evaluation.

\subsection{A sequential reference at checkpoint scale}
\label{sec:qam-reference}

Let $\theta_0,\ldots,\theta_k$ be chronologically ordered checkpoints,
$k\geq1$, and $c_i\geq0$ with $\sum_i c_i=1$ a convex base merge.
Writing $d_j=\theta_{j+1}-\theta_j$ yields WSM's exact correspondence
between checkpoint weights and recorded-update weights~\citep{tian2025wsm}:
\begin{equation}
\theta_c:=\sum_{i=0}^{k}c_i\theta_i
=\theta_0+\sum_{j=0}^{k-1}W_jd_j,
\qquad W_j:=\sum_{i=j+1}^{k}c_i.
\label{eq:qam-base-profile}
\end{equation}
\begin{equation}
c_0=1-W_0,\qquad
c_i=W_{i-1}-W_i\ (1\leq i<k),\qquad c_k=W_{k-1}.
\label{eq:inverse-tail-sum}
\end{equation}
Thus $c$ fixes a nonincreasing input profile $W\in[0,1]^k$.
Reweighting the recorded $d_j$ leaves their evaluation states unchanged.
To model the effect of changing those states, take a common transition
$F$, with $\theta_{i+1}=F(\theta_i)$, and define
\begin{equation}
\phi_0^W=\theta_0,\qquad
\phi_{j+1}^W=\phi_j^W+W_j\bigl[F(\phi_j^W)-\phi_j^W\bigr].
\label{eq:checkpoint-reference}
\end{equation}
Each transition is recomputed at the state produced by earlier ones.
One transition represents a whole checkpoint interval. For a single GD
step, this is learning-rate scaling. Similarly, for a saved block, it is the relaxation
of the block displacement. Agreement with this reference is our
\emph{sequential-consistency criterion} for selecting coefficients.

\subsection{Local discrepancy and its coefficient geometry}
\label{sec:qam-discrepancy}

\begin{assumption}[Local checkpoint transition]
\label{ass:common-map}
On a neighborhood of the initial state, the common transition obeys
\begin{equation}
F_h(\theta)=\theta+h f(\theta)+h^2 a(\theta)+O(h^3)
\label{eq:common-map}
\end{equation}
uniformly as $h\to0$, with $f\in C^2$, $a\in C^1$ and bounded local
derivatives. The initial state, $k$, and $W$ remain fixed.
\end{assumption}

Here $h$ measures the local size of a complete transition, and $a$
includes its second-order within-interval contribution. Set
$f_0=f(\theta_0)$, $a_0=a(\theta_0)$, $J_0=Df(\theta_0)$, and
\begin{equation}
A_W:=\sum_{j=0}^{k-1}W_j,
\qquad
B_W:=\sum_{0\leq\ell<j\leq k-1}W_\ell W_j.
\label{eq:qam-profile-sums}
\end{equation}
Expanding both finite trajectories (Appendix~\ref{app:common-proof}),
the base merge matches the terms in $f_0$ and $a_0$, but its
interaction term differs:
\begin{equation}
\theta_c-\phi_k^W=h^2\Delta_2(W)J_0f_0+O(h^3),\qquad
\Delta_2(W):=\sum_{\ell<j}W_j(1-W_\ell)\geq0.
\label{eq:common-discrepancy}
\end{equation}
The discrepancy has a direct interpretation. Consider a checkpoint index
$I_c\sim c$ and set $X_j=\mathbf1\{I_c>j\}$. The indicators are nested, namely
selecting an update also selects all earlier ones. Thus
$\E X_j=W_j$ and $\E[X_\ell X_j]=W_j$ for $\ell<j$, whereas the
sequential reference weights this interaction by $W_\ell W_j$.
Consequently $\Delta_2(W)=\sum_{\ell<j}\Cov(X_\ell,X_j)$.
This identifies a coefficient correction without estimating $J_0f_0$.

\subsection{QAM and second-order consistency}
\label{sec:qam-construction}

Replace the nested indicators by independent
$Y_j\sim\operatorname{Bernoulli}(W_j)$ and use the distribution of
$I_q=\sum_jY_j$ as checkpoint weights. Its generating polynomial is
\begin{equation}
\boxed{
Q_W(t):=\prod_{j=0}^{k-1}(1-W_j+W_jt)
=\sum_{i=0}^{k}q_it^i,
\qquad
\theta_{\mathrm{QAM}}:=\sum_{i=0}^{k}q_i\theta_i.
}
\label{eq:qam-weights}
\end{equation}
The coefficients are nonnegative and sum to $Q_W(1)=1$.
Independence gives $\E I_q=A_W$ and $\E\binom{I_q}{2}=B_W$,
matching both interaction terms in the reference.

\begin{theorem}[Local sequential consistency]
\label{thm:common-consistency}
For fixed $k,W$ and convex weights $p$ independent of $h$ and the
transition, $\sum_i p_i\theta_i-\phi_k^W=O(h^3)$ for every transition
satisfying Assumption~\ref{ass:common-map} if and only if
\[
\E_p I=A_W,\qquad \E_p\binom I2=B_W.
\]
Equivalently, $p$ shares QAM's index mean and variance. In particular,
$\theta_{\mathrm{QAM}}-\phi_k^W=O(h^3)$.
\end{theorem}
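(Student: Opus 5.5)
The plan is to expand both sides to second order in $h$ and compare coefficients of the three independent terms $f_0$, $a_0$ and $J_0f_0$.

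\emph{Checkpoint expansion.} Iterating Assumption~\ref{ass:common-map} along the recorded trajectory gives, for each fixed $i\le k$,
\[
\theta_i=\theta_0+h\,i f_0+h^2\Bigl(i\,a_0+\tbinom i2 J_0f_0\Bigr)+O(h^3).
\]
I would prove this by induction on $i$. Writing $\theta_i-\theta_0=O(h)$, Taylor expansion yields $f(\theta_i)=f_0+J_0(\theta_i-\theta_0)+O(h^2)$. The increment at step $i$ is then $hf_0+h^2(a_0+iJ_0f_0)+O(h^3)$, since $\theta_i-\theta_0=ihf_0+O(h^2)$. Summing these increments produces the $\binom i2$ coefficient. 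Because $k$ is fixed and the local derivatives are bounded, the $O(h^3)$ remainders accumulate only finitely many times and remain uniform. Averaging with weights $p$ gives
\[
\sum_i p_i\theta_i=\theta_0+h(\E_pI)f_0+h^2\Bigl((\E_pI)a_0+\E_p\tbinom I2 J_0f_0\Bigr)+O(h^3).
\]

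\emph{Reference expansion.} By the same induction applied to \eqref{eq:checkpoint-reference}, the $j$-th increment is
\[
W_j\bigl(hf(\phi_j)+h^2a(\phi_j)\bigr)+O(h^3)
=W_jhf_0+W_jh^2a_0+W_jh^2J_0(\phi_j-\theta_0)+O(h^3),
\]
with $\phi_j-\theta_0=h\bigl(\sum_{\ell<j}W_\ell\bigr)f_0+O(h^2)$. Summing over $j$ gives
\[
\phi_k^W=\theta_0+hA_Wf_0+h^2\bigl(A_Wa_0+B_WJ_0f_0\bigr)+O(h^3).
\]

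\emph{Comparison.} The difference is
\[
h(\E_pI-A_W)f_0+h^2(\E_pI-A_W)a_0+h^2\Bigl(\E_p\tbinom I2-B_W\Bigr)J_0f_0+O(h^3).
\]
Sufficiency is then immediate. For necessity, the obstacle is to exhibit admissible transitions that isolate each coefficient.

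\begin{itemize}
\item Take $F_h(\theta)=\theta+hv$ with a constant $v\neq0$, so $a=0$ and $J=0$. The difference is then exactly $h(\E_pI-A_W)v$, which forces $\E_pI=A_W$.
\item Take the linear map $f(\theta)=J\theta$ in one dimension, with $J=1$, $a=0$, and $\theta_0=1$ so that $J_0f_0=1\neq0$. Given the mean condition, the only remaining term is $h^2(\E_p\binom I2-B_W)$, which forces $\E_p\binom I2=B_W$.
\end{itemize}

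Both transitions satisfy the assumption with no remainder. Since $p$ is independent of $h$ and of the transition, these coefficients are genuine constants, and the conclusion requires them to vanish.

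\emph{Equivalent forms and QAM.} Since $\E\binom I2=\tfrac12(\E I^2-\E I)$, matching the mean and $\E\binom I2$ is equivalent to matching the mean and the variance. For QAM, independence of the Bernoulli variables gives $\E I_q=\sum_j W_j=A_W$ and $\E\binom{I_q}2=\sum_{\ell<j}\E Y_\ell Y_j=B_W$. This is already noted after \eqref{eq:qam-weights}. The criterion therefore applies to QAM, and $\theta_{\mathrm{QAM}}-\phi_k^W=O(h^3)$.
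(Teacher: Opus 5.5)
Your proposal is correct and follows essentially the same route as the paper: the same inductive second-order expansions of $\theta_i$ and $\phi_k^W$, the same coefficient comparison in $f_0$, $a_0$, $J_0f_0$, and the same two test transitions for necessity (a constant field, then $F_h(x)=(1+h)x$ at $x_0=1$). The only cosmetic difference is that you obtain QAM's moments from $\binom{I_q}{2}=\sum_{\ell<j}Y_\ell Y_j$ rather than from $Q_W'(1)=A_W$ and $Q_W''(1)=2B_W$, which is equivalent.
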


\begin{figure}[H]
    \centering
    \includegraphics[width=0.8\linewidth]{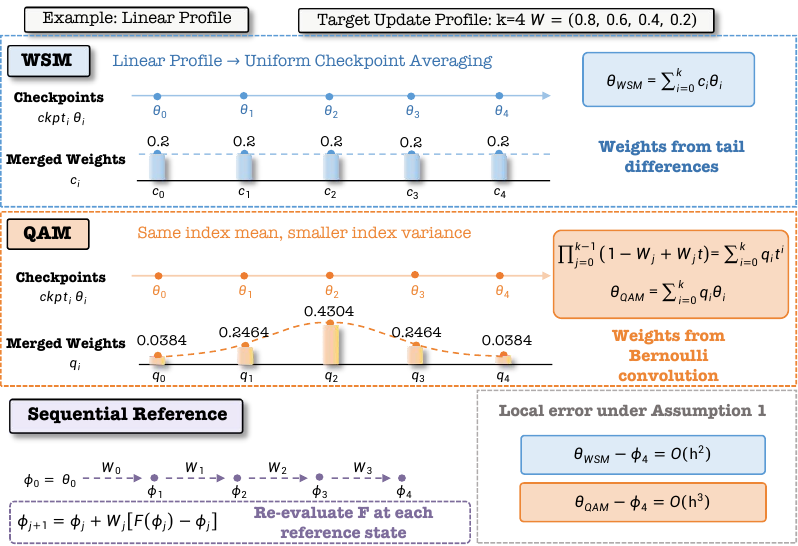}
    \caption{Illustrative example of QAM with a linearly decaying update profile $W_j$.}
\end{figure}
The characterization separates the consistency criterion from its
realization. For $k\geq3$ and $W_j\in(0,1)$, the convex solution family
has dimension $k-2$: normalization and the two moments impose three
independent linear constraints on $k+1$ coefficients, and $q_i>0$.
Quadratic matching below selects one member of this family.
QAM preserves
the base merge's mean and changes its variance by exactly
\begin{equation}
\operatorname{Var}(I_c)-\operatorname{Var}(I_q)
=2\Delta_2(W)\geq0.
\label{eq:qam-variance-gap}
\end{equation}
Thus half the variance reduction equals the weight-dependent factor
in Eq.~\eqref{eq:common-discrepancy}. This is a geometric interpretation
of the consistency correction, rather than a monotonic relationship
between variance and downstream accuracy.

\subsection{INFORMATION LIMIT FOR CHECKPOINT-ONLY RECONSTRUCTION}
\label{sec:order-barrier}

The attainable order is limited by the observations, even if the merge
can depend nonlinearly on the states. Consider ordinary GD with step
$h$, starting at $0\in\mathbb R^2$, and the reference GD steps $hW_j$.
For a fixed $M>0$, define the fixed loss class
\begin{equation}
\mathcal L_M=\left\{L\in C^4(\mathbb R^2):
\begin{array}{l}
\tfrac12 I\preceq\nabla^2L\preceq\tfrac52 I,\quad\|\nabla L(0)\|\leq3,\\
\|D^3L\|_\infty,\|D^4L\|_\infty\leq M
\end{array}\right\}.
\label{eq:loss-class}
\end{equation}
Let $T_h(L)=(\theta_0,\ldots,\theta_k)$ be the recorded history.
The best worst-case reconstruction error from this information is
\begin{equation}
R_h=\inf_{\mathcal A}\sup_{L\in\mathcal L_M}
\|\mathcal A(T_h(L),h,W)-\phi_k^W(L)\|,
\label{eq:checkpoint-risk}
\end{equation}
where $\mathcal A$ is any algorithm using only the displayed inputs.
\begin{remark}
The constants $1/2$, $5/2$, and $3$ are convenient choices
for the construction and have no special threshold significance.
\end{remark}
\begin{theorem}[Checkpoint-information bound]
\label{thm:information-bound}
Fix $k\geq2$, $W\in[0,1]^k$, and $M>0$. If
$S_W:=\sum_{\ell<j}W_jW_\ell(1-W_\ell)>0$, there are
$c,C,h_0>0$ such that
$c h^3\leq R_h\leq C h^3$ for $0<h<h_0$.
QAM attains the upper bound. The lower bound continues to hold if
checkpoint gradients and loss values are also supplied.
If $S_W=0$, QAM reconstructs the reference exactly.
\end{theorem}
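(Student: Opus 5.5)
The argument has three parts: an upper bound from Theorem~\ref{thm:common-consistency}, a lower bound from a two-loss construction, and a separate treatment of the degenerate case $S_W=0$.

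\emph{Upper bound.} Gradient descent with step $h$ on $L\in\mathcal L_M$ is the transition $F_h(\theta)=\theta-h\nabla L(\theta)$. This satisfies Assumption~\ref{ass:common-map} with $f=-\nabla L$ and $a=0$, and the remainder vanishes identically. The reference steps $hW_j$ are exactly the relaxations in Eq.~\eqref{eq:checkpoint-reference}. Theorem~\ref{thm:common-consistency} then gives $\theta_{\mathrm{QAM}}-\phi_k^W=O(h^3)$.

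We need the constant to be uniform over $\mathcal L_M$. The plan is to redo the expansion of Appendix~\ref{app:common-proof} keeping explicit remainders.

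1. All iterates stay within $O(h)$ of $0$, because $\|\nabla L\|\le 3+\tfrac52\|\theta\|$.
2. The Taylor remainders of $\nabla L$ are controlled by $\|D^3L\|_\infty\le M$, together with the Hessian bound $\tfrac52$ and $\|\nabla L(0)\|\le3$.

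With these, the cubic remainder is bounded by a constant depending only on $k$ and $M$. Taking $\mathcal A=\theta_{\mathrm{QAM}}$ gives $R_h\le Ch^3$.

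\emph{Degenerate case $S_W=0$.} Here every pair $\ell<j$ has $W_j=0$ or $W_\ell\in\{0,1\}$. Consider the combinatorial structure of $W$: the nonzero entries are an initial run equal to $1$, followed by at most one fractional entry, followed by zeros. There is one exception: a zero may precede a later nonzero entry. In that case the map $\phi\mapsto\phi$ is simply skipped and the index shifts, so I would check the claim directly.

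For each $j$, either $W_j\in\{0,1\}$, or $W_j$ is the last nonzero entry. Steps with $W_j=1$ reproduce $F$, steps with $W_j=0$ are the identity, and a single fractional last step is linear interpolation between consecutive checkpoints. So $\phi_k^W$ is a convex combination of two consecutive checkpoints. Since $Q_W$ factors into deterministic shifts times one Bernoulli factor, it gives exactly these weights.

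More carefully, zero entries before the last nonzero step would make the reference sit at an earlier state. The key point is that $S_W=0$ forces $W_\ell\in\{0,1\}$ for every $\ell$ preceding a nonzero entry, so the reference applies exactly $m$ full GD steps, where $m$ is the number of ones, and then at most one fractional step. The QAM distribution is the point mass at $m$, mixed with $m+1$ by the fractional weight, giving $\theta_m$ and $\theta_{m+1}$. This matches the reference exactly.

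\emph{Lower bound.} The plan is a two-point argument: exhibit $L_1,L_2\in\mathcal L_M$ with identical histories, gradients, and losses, but $\|\phi_k^W(L_1)-\phi_k^W(L_2)\|\ge 2ch^3$. Any $\mathcal A$ then errs by at least $ch^3$ on one of them.

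Construction:

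1. Take $L_1(\theta)=\tfrac32\|\theta\|^2/2+g\cdot\theta$ with $\|g\|$ small, so the GD checkpoints lie along a ray from $0$.
2. Let $L_2=L_1+\psi$, where $\psi$ vanishes to second order along the recorded points but has a nonzero third derivative there. Concretely, take $\psi(\theta)=\varepsilon\,\chi(\theta)\,x_2^2\,x_1$, with coordinates chosen so that the checkpoints lie on the axis $x_2=0$. Then $\psi$ and $\nabla\psi$ vanish on that axis, so the checkpoints, gradients and losses coincide.
3. The reference iterates leave the axis only if the gradient has a transverse component. To arrange this, orient the data so that a transverse perturbation arises at order $h^2$, through a non-symmetric Hessian direction. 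Equivalently, use $\psi=\varepsilon\chi\,x_2 x_1^2$, corrected so that $\nabla\psi=0$ on the axis.

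Making this step work is the main obstacle. The reference states $\phi_j^W$ differ from the checkpoints at order $h^2$, with the discrepancy proportional to the $S_W$-type combination $W_\ell(1-W_\ell)$. The perturbation $\psi$ must have gradient vanishing exactly on the recorded points, yet have a Hessian or third-derivative term whose effect, evaluated at the off-history reference states, accumulates to $c\,S_W h^3$. My first attempt would be the following.

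1. Place the history on a line $\ell_h$, parametrized by $x_1$.
2. Define the perturbation $\psi=\varepsilon\chi(\theta)\,\bigl(x_2-\kappa x_1^2\bigr)^2$-type, so that $\psi$ vanishes to second order exactly along a curve through the checkpoints.
3. Compute the reference deviation by linearizing. It equals $h\sum_j W_j\nabla^2\psi\,(\phi_j-\theta_{\cdot})$, with $\phi_j-\theta_j=\Theta(h^2)$. The leading coefficient should reduce to $S_W$ times a nonzero geometric factor.

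Finally, $\varepsilon$ must be small relative to $M$ and to the Hessian margins, with a cutoff $\chi$, so that $L_2\in\mathcal L_M$ uniformly in $h$.
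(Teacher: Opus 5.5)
Your uniform upper bound and your treatment of $S_W=0$ are correct and match the paper. The paper expands to third order only to exhibit the leading term $\tfrac12h^3S_W\,D^2f(0)[f(0),f(0)]$; your second-order expansion with uniform remainder already gives $R_h\le Ch^3$.

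\textbf{The gap is the lower bound, which is the main content of the theorem.} You never exhibit $L_1,L_2\in\mathcal L_M$ with identical records and $\Omega(h^3)$-separated reference endpoints, and each construction you sketch fails.
\begin{itemize}
\item With $L_1$ isotropic, the reference iterates stay on the same ray as the checkpoints. Since $\psi=\varepsilon\chi x_2^2x_1$ has zero gradient on the whole axis, the two references coincide exactly.
\item This is not an accident of isotropy. A quadratic whose GD iterates from $0$ are collinear must have $\nabla L_1(0)$ as a Hessian eigenvector, so its reference stays on that line.
\item More generally, collinearity of $\theta_0,\theta_1,\theta_2$ forces the component of $J_0f_0$ transverse to the line to be $O(h)$. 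The reference is then within $O(h^3)$ of the line, and any perturbation vanishing to second order along the line moves the endpoint by only $O(h^4)$. A straight-line history is the wrong geometry.
\item Your ``corrected'' $x_2x_1^2$ term is never defined.
\item In your first attempt the history lies on $x_2=0$, while $(x_2-\kappa x_1^2)^2$ vanishes on a parabola that misses the checkpoints when $\kappa\neq0$. Then $\nabla\psi(\theta_i)\neq0$ and the records differ.
\item $\phi_j-\theta_j$ is generally $\Theta(h)$, not $\Theta(h^2)$, because $A_j=\sum_{\ell<j}W_\ell\neq j$. What must be $\Theta(h^2)$ is the distance from $\phi_j$ to the zero set of the perturbation.
\end{itemize}

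\textbf{The missing idea} is a base loss whose ordinary GD iterates lie \emph{exactly} on a fixed curved set, while the relaxed reference drifts off it at order $h^2$ with coefficient $\sum_{\ell<j}W_\ell(1-W_\ell)$.

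The paper takes $r(x,y)=y-x^2-2x$ and $L_h^{(0)}=\tfrac12(1+x)^2+\tfrac12(2-h)(1+y)^2$. An ordinary step multiplies $1+x$ by $1-h$ and $1+y$ by $(1-h)^2$, so $r(\theta_i)=0$ for every $i$. A relaxed step multiplies $1+y$ by $1-2hW_j+h^2W_j$ instead of $(1-hW_j)^2$. This gives the exact recurrence $r_{j+1}=(1-2hW_j+h^2W_j)r_j+h^2W_j(1-W_j)(1+x_j)^2$, hence $r(\phi_j)=h^2V_j+O(h^3)$ with $V_j=\sum_{\ell<j}W_\ell(1-W_\ell)$.

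Adding $\varepsilon\chi r^2/2$ leaves states, gradients and losses on the history unchanged. Each reference step, however, picks up $-\varepsilon hW_jr\nabla r=\varepsilon h^3W_jV_j(2,-1)+O(h^4)$. A Gronwall-type bound keeps the trajectory difference at $O(h^3)$, and summing gives an endpoint separation of $\varepsilon S_Wh^3(2,-1)+O(h^4)$.

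The $h$-dependent curvature $2-h$ is what makes the invariance exact. It is admissible because $\mathcal L_M$ is fixed while the hard pair may vary with $h$. Without a construction of this kind, the lower bound, and hence the two-sided rate, remains unproved.
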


\paragraph{The history is insufficient.}
Set $r(x,y)=y-x^2-2x$ and
$L_h(x,y)=\tfrac12(1+x)^2+\tfrac12(2-h)(1+y)^2$.
Its ordinary iterates
$((1-h)^i-1,(1-h)^{2i}-1)$ lie on $r=0$.
Adding $\varepsilon r^2/2$ near this curve, with a smooth cutoff,
preserves every recorded state, gradient, and loss. Both objectives
belong to the same $\mathcal L_M$ for small fixed $\varepsilon>0$.
Their reference endpoints nevertheless differ by
$\varepsilon S_Wh^3(2,-1)+O(h^4)$.
The same observations must produce the same reconstruction, so one
error is at least half this separation. The matching uniform upper
bound follows from QAM's moment conditions.
Appendix~\ref{app:information-bound} gives the complete construction.

The hard objectives may vary with $h$ inside the fixed class; the result
is a uniform minimax rate, not a pointwise lower bound for a fixed loss.
Other moment-matching rules can attain the rate, which does not order
their constants or task scores. For fixed linear weights, two third-order
terms give a corresponding algebraic obstruction; QAM's leading remainder
is proportional to $S_WD^2f[f,f]$
(Appendix~\ref{app:order-barrier}). Quadratic exactness below selects a
complete rule where this nonlinear term vanishes.

\subsection{Quadratic exactness selects a complete rule}
\label{sec:qam-quadratic-exactness}

Local quadratic loss models motivate a stronger criterion
\citep{li2025modelmerging}. Let
\begin{equation}
L(\theta_0+u)=L(\theta_0)+g_0^\top u+\tfrac12u^\top H_0u,
\qquad H_0=H_0^\top.
\label{eq:qam-quadratic-model}
\end{equation}
be fixed, and consider ordinary GD with step $\eta$ and reference GD
with steps $\eta W_j$, both initialized at $\theta_0$.
\begin{theorem}[Quadratic endpoint matching and uniqueness]
\label{thm:qam-quadratic-exactness}
Fix $k\geq1$, $W\in[0,1]^k$, and $\eta>0$.
For every fixed quadratic objective and initial state, the
GD trajectories in Eqs.~\eqref{eq:qam-local-map}
and~\eqref{eq:qam-reference} satisfy
\begin{equation}
\sum_{i=0}^{k}q_i\theta_i=\phi_k^W.
\label{eq:qam-quadratic-endpoint}
\end{equation}
If real coefficients $p_0(W),\ldots,p_k(W)$ depending only on $W$
satisfy this identity with $p$ in place of $q$ for every quadratic
objective and every initial state, then $p_i(W)=q_i(W)$ for all $i$.
\end{theorem}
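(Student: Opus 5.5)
Under the quadratic model \eqref{eq:qam-quadratic-model}, gradient descent is affine in the state, so every trajectory reduces to powers of a single linear operator and both claims become polynomial identities in that operator.

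\emph{Affine reduction.} Write $u=\theta-\theta_0$. The gradient is $g_0+H_0u$. Choose $u^\ast$ with $g_0+H_0u^\ast=0$ whenever $H_0$ is invertible, and set $M:=I-\eta H_0$. Ordinary GD with step $\eta$ then satisfies
\[
\theta_i-\theta_0-u^\ast=M^i(-u^\ast),
\]
so with $v:=-u^\ast$ we get $\theta_i=\theta_0+u^\ast+M^iv$. The reference step $\eta W_j$ uses the matrix $(1-W_j)I+W_jM$, and these matrices commute. Hence
\[
\phi_k^W=\theta_0+u^\ast+\prod_{j}\bigl((1-W_j)I+W_jM\bigr)v=\theta_0+u^\ast+Q_W(M)v.
\]
Since $\sum_iq_i=1$, we also have
\[
\sum_iq_i\theta_i=\theta_0+u^\ast+\sum_iq_iM^iv=\theta_0+u^\ast+Q_W(M)v,
\]
which is Eq.~\eqref{eq:qam-quadratic-endpoint}. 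Singular $H_0$ needs no fixed point. One can argue by density and continuity, since both sides are polynomial in $(g_0,H_0,\theta_0)$ for fixed $k$. Alternatively, one can write the displacement as $\sum_{m<i}M^m(-\eta g_0)$ and use the polynomial identity $\sum_iq_i\,\frac{t^i-1}{t-1}=\frac{Q_W(t)-1}{t-1}$, which holds coefficientwise because $Q_W(1)=1$.

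\emph{Uniqueness.} Suppose $p$ satisfies the identity for all quadratics and initial states. First choose $g_0=0$ and $H_0=0$. Then all $\theta_i=\theta_0$, so $\sum_ip_i\theta_0=\theta_0$ for every $\theta_0$, which forces $\sum_ip_i=1$. Next take the scalar case $H_0=\lambda>0$ with $v\neq0$. The identity becomes
\[
\sum_ip_i(1-\eta\lambda)^iv=Q_W(1-\eta\lambda)v.
\]
Because $\eta$ is fixed and $\lambda$ ranges over an interval, $t=1-\eta\lambda$ ranges over infinitely many values. The polynomial $\sum_ip_it^i-Q_W(t)$ has degree at most $k$ and vanishes at infinitely many points, so it is identically zero, and therefore $p_i=q_i$. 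To stay literally in $\mathbb{R}^d$ with symmetric $H_0$, one can take $H_0=\lambda I$ or restrict to a one-dimensional eigendirection.

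The only mild obstacle is handling singular $H_0$ in the existence part without a fixed point. The coefficientwise polynomial identity, or the continuity argument, resolves it. Everything else is bookkeeping that uses commutativity of the step matrices.
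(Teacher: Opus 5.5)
Your proof is correct. The core mechanism matches the paper's: each reference step is the polynomial $(1-W_j)+W_jt$ evaluated at one GD matrix, so the commuting product collapses to $Q_W$ of that matrix. Your uniqueness argument is also essentially the paper's: scalar quadratics, and a degree-$k$ polynomial in $t=1-\eta\lambda$ that vanishes on an infinite set.

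The real difference is how you handle the linear term $g_0$ in the exactness half.

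\emph{Your route.} You center at a stationary point $u^\ast$. This makes the dynamics linear but needs $g_0\in\operatorname{range}H_0$, for instance $H_0$ invertible. You then reach singular $H_0$ through density of invertible symmetric matrices and polynomial dependence on $(g_0,H_0,\theta_0)$, which is valid.

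\emph{The paper's route.} It uses homogeneous coordinates. Ordinary GD acts on $(u,1)^\top$ through $\mathcal M_\eta=\begin{pmatrix}I-\eta H_0&-\eta g_0\\0&1\end{pmatrix}$, and each reference step is $(1-W_j)I+W_j\mathcal M_\eta$. The single identity $\prod_j[(1-W_j)I+W_j\mathcal M_\eta]=Q_W(\mathcal M_\eta)$, applied to $(0,1)^\top$, covers every $g_0$ and every singular or indefinite $H_0$ at once. Normalization $Q_W(1)=1$ then appears as the bottom coordinate rather than as a separate step.

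\emph{Your second alternative.} This is the same identity read off the upper-right block, since that block of $P(\mathcal M_\eta)$ is $\frac{P(t)-P(1)}{t-1}\big|_{t=M}(-\eta g_0)$. As written, though, it still needs a short induction on the reference side. You must show that after $j$ steps the displacement equals $\frac{P_j(t)-1}{t-1}\big|_{t=M}(-\eta g_0)$, where $P_j(t)=\prod_{\ell<j}(1-W_\ell+W_\ell t)$. The augmented matrix gives this for free.

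\emph{Trade-off.} Your centering gives intuition: $q$ acts as a polynomial filter on the deviation from the minimizer. The paper's route gives an algebraic proof with no case split and no limiting argument.

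\emph{Normalization in uniqueness.} Your preliminary normalization step is correct but avoidable. If you put the minimizer at the origin, as the paper does with $L_\lambda(x)=\frac12\lambda x^2$ and $x_0=1$, the affine offset vanishes. The identity then reads $\sum_ip_it^i=Q_W(t)$ directly, with no condition on $\sum_ip_i$ needed.
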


For $L(x)=\tfrac12\lambda x^2$, set $t=1-\eta\lambda$. Ordinary GD yields
$x_i=t^ix_0$, whereas each reference step multiplies its state
by $1-W_j+W_jt$. Expanding their product gives the coefficients in
Eq.~\eqref{eq:qam-weights}; matching this polynomial for every scalar
quadratic proves uniqueness. Appendix~\ref{app:gd-derivation} gives the
full vector proof, including linear terms and singular or indefinite Hessians.

\paragraph{Coefficient construction.}
\label{sec:qam-implementation}
Initialize $q^{(0)}_0=1$, with zero entries outside the support, and
apply the convolution
\begin{equation}
q_i^{(j+1)}=(1-W_j)q_i^{(j)}+W_jq_{i-1}^{(j)},
\qquad i=0,\ldots,j+1.
\label{eq:qam-coefficient-recurrence}
\end{equation}
This costs $O(k^2)$ scalar operations and $O(k)$ working memory.
The final merge uses the saved checkpoints without gradients,
optimizer states, a fitted transition, or additional training.
For actual Adam histories, a common local transition is a design model;
the experiments assess the resulting weights, not reference-endpoint accuracy.

\section{Experiments}
\label{experiment}

We evaluate the practical utility of QAM on saved pretraining histories,
using matched WSM merges and individual checkpoints as references.
These comparisons assess downstream quality rather than reconstruction
of a separately trained reference endpoint.

\subsection{Setup}
\label{sec:setup}

\paragraph{Trajectories and profiles.}
We study two public models, SmolLM3-3B and
OpenEuroLLM-Prelude-9B, using checkpoints from constant-learning-rate
training (Table~\ref{tab:settings}). The SmolLM3 windows contain 5, 10, or 15 checkpoints
and span approximately 0.40, 0.90, or 1.40T training tokens.
The Prelude windows contain 10, 20, or 40 checkpoints and span
0.36, 0.76, or 1.56T tokens. We call these the short, intermediate,
and long windows within each trajectory.
For $k+1$ checkpoints, ordered oldest to newest,
we set $W_j=D((j+1)/(k+1))$ for $j=0,\ldots,k-1$ and use
\[
D_{\mathrm{linear}}(s)=1-s,\qquad
D_{\mathrm{cosine}}(s)=\cos(\pi s/2),\qquad
D_{1-\sqrt{\cdot}}(s)=1-\sqrt{s}.
\]
Paired WSM and QAM merges use the same checkpoints and profile,
changing only the coefficients in Eqs.~\eqref{eq:inverse-tail-sum}
and~\eqref{eq:qam-weights}. Appendix~\ref{app:weights} shows the resulting checkpoint weights
across window lengths and profiles.
All merges use \texttt{fp32} accumulation and \texttt{bf16} export,
without additional training or coefficient fitting. 

\paragraph{Evaluation and reporting.}
The suite covers 15 tasks in six capability groups including Knowledge,
Commonsense, Reading, Math, Semantics, and Dialogue.
MMLU and GSM8K use 5-shot evaluation, and other accuracy tasks use zero-shot.
Group scores weight accuracy tasks by evaluation-set size. Math averages
flexible and strict GSM8K extraction, and Dialogue uses MuTual MRR.
Appendix~\ref{app:experimental-details} gives task membership and
checkpoint ranges.
We report differences and win counts over the
evaluated grid. Configurations share training histories and evaluation
examples, and these counts should not be interpreted as independent replications.

\begin{table}[H]
\centering
\small
\caption{Model details.}
\label{tab:settings}
\begin{tabular}{@{}L{2.35cm}L{1.35cm}L{3.95cm}L{1.55cm}L{2.35cm}@{}}
\toprule
Model & Size & Checkpoints & Interval & Windows \\
\midrule
\textbf{SmolLM3}
& $\sim 3.075$B
& last 15 constant-LR checkpoints; iter 3.64M--4.20M
& $\sim 100$B tokens
& Last-5/10/15 \\
\addlinespace[0.25em]
\textbf{OpenEuroLLM Prelude}
& $\sim 9.095$B
& last 40 constant-LR checkpoints; iter 859200--952800
& $\sim 40$B tokens
& Last-10/20/40 \\
\bottomrule
\end{tabular}
\end{table}

\subsection{Utility across capabilities}
\label{sec:capability-utility}

Against the mean constituent score, which is the expected score
of selecting one checkpoint uniformly from the window, QAM improves in 105/108 capability
comparisons, including every Knowledge, Commonsense, Math, and
Semantics configuration (Tables~\ref{tab:stage2_group_vs_envelope}
and~\ref{tab:prelude_group_vs_envelope}). The only three regressions are in SmolLM3 Reading and Dialogue. We next
compare coefficient rules and stronger individual-checkpoint references.

\subsection{Window-dependent comparison with WSM}
\label{sec:window-results}

Figures~\ref{fig:heatmap:smollm3} and~\ref{fig:heatmap:prelude} show
all paired capability-group differences across windows and profiles.
Short-window differences are small and mixed as the median absolute gap
is about 0.35 points, with QAM lower in 21 of 36 comparisons. This is the window scale emphasized
in prior work~\citep{tian2025wsm}.
At intermediate windows, QAM wins 28 of 36 comparisons, including
22 of 30 when Math is excluded. At long windows, it wins 29 of 36,
including 23 of 30 outside Math.

\begin{figure}[htbp]
    \centering
    \includegraphics[width=0.95\textwidth]{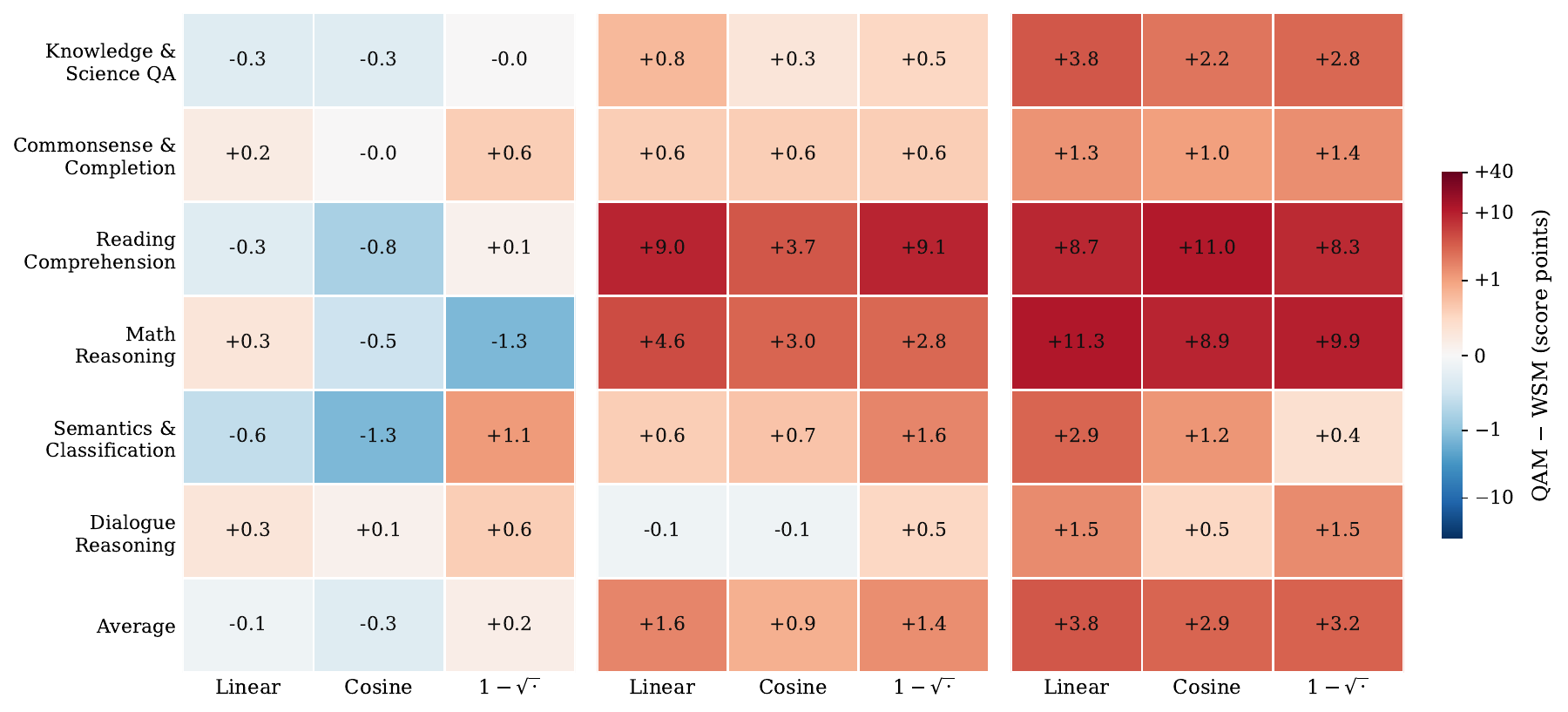}
    \caption{Paired score differences (\QAM{} minus \WSM{}) on
SmolLM3 across six capability groups and profiles. Average weights tasks by evaluation-set size.}
    \label{fig:heatmap:smollm3}
\end{figure}
\begin{figure}[htbp]
    \centering
    \includegraphics[width=0.95\textwidth]{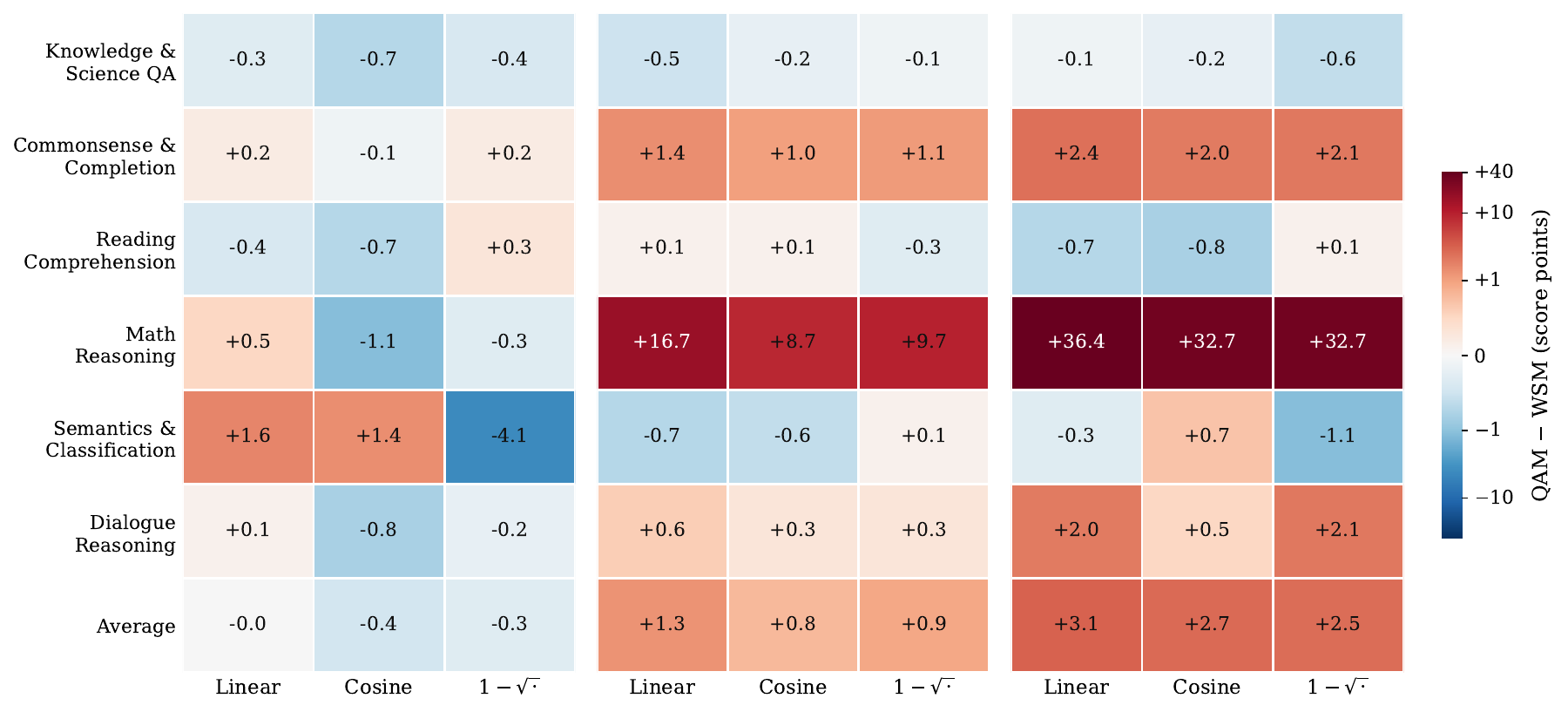}
    \caption{Same as Figure~\ref{fig:heatmap:smollm3}, but for OpenEuroLLM Prelude.}
    \label{fig:heatmap:prelude}
\end{figure}

\paragraph{Intermediate and long windows.}
\label{sec:intermediate-window}
\label{sec:long-window}
SmolLM3 improves in 16/18 intermediate-window and all 18 long-window
comparisons. Prelude improves in 12/18 and 11/18, respectively.
The largest long-window gains occur in Math: $+8.91$--$+11.30$ points
for SmolLM3 and $+32.68$--$+36.43$ for Prelude. Prelude's seven
long-window decreases are all below 1.2 points and occur in Knowledge,
Reading, or Semantics. Its large Math gap includes a WSM generation
failure, and Appendix~\ref{app:prelude_output_diagnostics} distinguishes
this behavior from the less severe SmolLM3 case.

\subsection{Comparison with individual checkpoints}
\label{sec:individual-checkpoint-comparison}

We compare QAM with individual checkpoints across all three windows
and all three profiles on each trajectory.
Table~\ref{tab:checkpoint-direct} summarizes comparisons with every
constituent checkpoint, the latest checkpoint, and the task-wise
best checkpoint within each window.

\begin{table}[htbp]
\centering
\small
\caption{QAM versus individual checkpoints across all nine configurations
per trajectory. Entries are wins/ties/losses.
``All'' counts comparisons with every constituent checkpoint.
``Latest'' and ``Best'' each contain 135 task--configuration comparisons
per trajectory.}
\label{tab:checkpoint-direct}
\begin{tabular}{@{}lrrr@{}}
\toprule
Trajectory & All & Latest & Best \\
\midrule
SmolLM3 & 1092/48/210 & 102/5/28 & 70/3/62 \\
Prelude & 2716/121/313 & 117/9/9 & 77/2/56 \\
\midrule
Total & 3808/169/523 & 219/14/37 & 147/5/118 \\
\bottomrule
\end{tabular}
\end{table}

QAM exceeds the latest checkpoint in 219 of 270 comparisons
and the task-wise best constituent in 147 of 270.
It wins 80.9\% of constituent-checkpoint comparisons on SmolLM3
and 86.2\% on Prelude.
Appendix~\ref{app:checkpoint-direct} reports the results
for every window, profile, and task.

\begin{remark}
    WSC single-checkpoint scores fluctuate substantially, and the best
checkpoint can exceed QAM by a large margin
(Appendix~\ref{app:full results}). This unfavorable case is retained
in all counts and aggregate comparisons.
\end{remark}

\subsection{Comparison with the best tested WSM}
\label{sec:cross-window-comparison}

We focus on intermediate and long windows, where the paired
comparisons show the clearest QAM gains. We examine whether
these gains persist when WSM can choose its best window and
profile from the full tested grid.

For each task, we compare the best QAM score across these two
windows and all three profiles with the best WSM score across
all nine tested window/profile combinations.
QAM exceeds this baseline on 8 of 15 tasks for SmolLM3 and
9 of 15 for Prelude.
These are task-wise maxima, and the maximizing configurations
can differ across tasks.
Appendix~\ref{app:best_observed_tasks} reports the full
task-wise comparison.

Table~\ref{tab:fixed_linear_math} further examines GSM8K with
QAM's profile fixed to linear, while allowing WSM to choose
its best window and profile.
QAM exceeds this baseline at both intermediate and long
windows on both trajectories, by 1.55--2.12 points.

\begin{table}[htbp]
\centering
\small
\caption{GSM8K with a fixed linear QAM profile versus the best
WSM score over all nine tested window/profile combinations
on each trajectory. Scores average flexible and strict extraction.}
\label{tab:fixed_linear_math}
\begin{tabular}{@{}llrrr@{}}
\toprule
Trajectory & QAM window & Linear QAM & Best WSM & $\Delta$ \\
\midrule
SmolLM3 & Last-10 & 43.71 & 41.74 & +1.97 \\
SmolLM3 & Last-15 & 43.29 & 41.74 & +1.55 \\
Prelude & Last-20 & 44.47 & 42.91 & +1.56 \\
Prelude & Last-40 & 45.03 & 42.91 & +2.12 \\
\bottomrule
\end{tabular}
\end{table}

\section{Second-Order Description and Coefficient geometry}
\label{sec:analysis}
\label{sec:degradation-diagnostic}

The variance identity relates the local discrepancy to coefficient
geometry. We examine this relation empirically on long-window GSM8K,
where WSM and QAM differ most. For paired WSM and QAM weights, let
$p^{(\lambda)}=(1-\lambda)c+\lambda q$, $\lambda\in[0,1]$.
The means of $c$ and $q$ agree, so
\begin{equation}
\operatorname{Var}(I_\lambda)
=\operatorname{Var}(I_c)-2\lambda\Delta_2(W).
\label{eq:interpolation-variance}
\end{equation}
Under Assumption~\ref{ass:common-map}, the same path satisfies
\begin{equation}
\theta_{\lambda}-\phi_k^W
=h^2(1-\lambda)\Delta_2(W)J_0f_0+O(h^3).
\label{eq:path-discrepancy}
\end{equation}
Thus, it reduces both the index variance and the model's leading
discrepancy coefficient. The path also changes higher moments and
interpolates between the two merged parameter vectors.

\begin{figure}[H]
    \centering
    \includegraphics[width=0.8\textwidth]{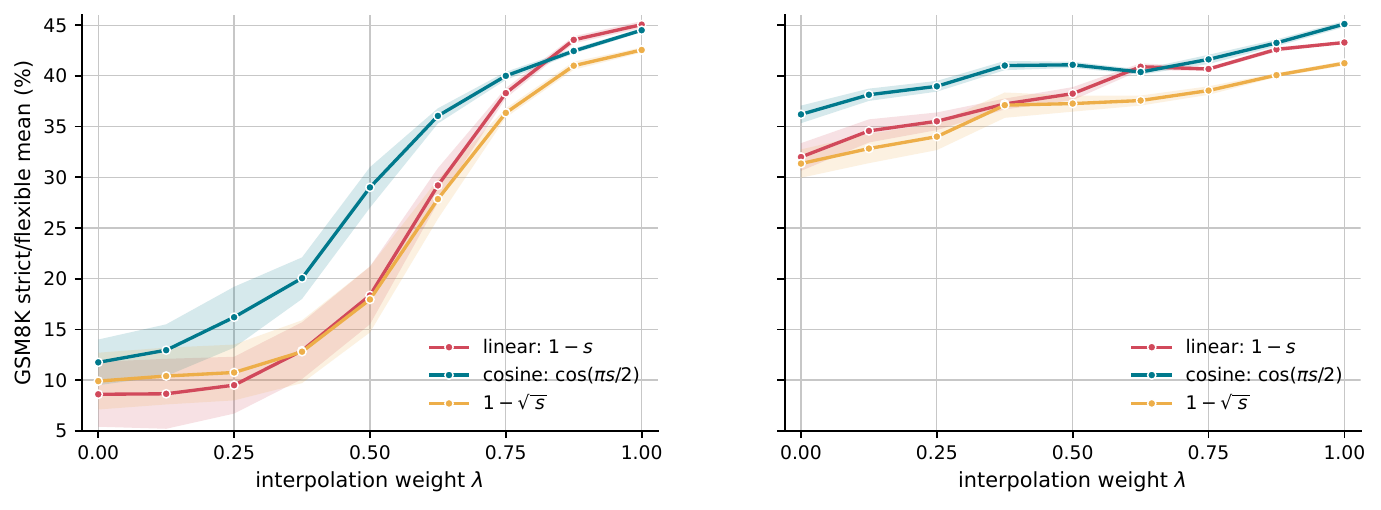}
    \caption{WSM--QAM interpolation on Prelude Last-40 (left) and
SmolLM3 Last-15 (right). Lines average flexible and strict GSM8K
extraction; shading spans the two extraction scores.}
    \label{fig:pathprelude}
\end{figure}

Figure~\ref{fig:pathprelude} shows overall improvement toward QAM
for all three profiles on both histories. Prelude's path also recovers
normal generation and termination behavior, whereas the SmolLM3 WSM
endpoints show much less severe formatting problems
(Appendix~\ref{app:prelude_output_diagnostics}). The trend of performance improvement is shared on both models.

To test whether the two index moments determine these scores, we pair
each $\lambda\in\{0.25,0.50,0.75\}$ mixture with a maximum-entropy (MaxEnt)
distribution on the same checkpoint grid. Its coefficients have the
form $r_i\propto\exp(a i+b i^2)$, with $a,b$ set by the mixture's
mean and variance. This fixes a second coefficient vector without
choosing it by benchmark performance. Appendix~\ref{app:offpath_controls}
specifies the construction.
Each pair therefore shares the same second-order expansion under the
local model, including its residual relative to the fixed reference.

Table~\ref{tab:offpath_controls} gives the paired scores.
The rankings vary: on SmolLM3, MaxEnt has higher flexible-extraction
scores in six of nine pairs, whereas on Prelude the mixture has higher
scores in seven of nine. At Prelude $\lambda=0.75$, the flexible-score
gaps favoring the mixture are 6.6, 1.1, and 13.4 points for the three
profiles. For these pairs, the input grid, center, variance, and second-order
reference discrepancy are all held fixed. Averaging the two extraction scores, the largest gap is 15.15 points.

\begin{table}[H]
\centering
\footnotesize
\setlength{\tabcolsep}{3.3pt}
\caption{GSM8K intermediate-target controls (\%). Mix stands for
$p^{(\lambda)}$. MaxEnt matches its two moments on the same grid.}
\label{tab:smol_offpath_controls}
\label{tab:offpath_controls}
\begin{tabular}{@{}lcrrrrrrrr@{}}
\toprule
 & & \multicolumn{4}{c}{SmolLM3 Last-15}
 & \multicolumn{4}{c}{Prelude Last-40} \\
\cmidrule(lr){3-6}\cmidrule(l){7-10}
 & & \multicolumn{2}{c}{Flex} & \multicolumn{2}{c}{Strict}
 & \multicolumn{2}{c}{Flex} & \multicolumn{2}{c}{Strict} \\
\cmidrule(lr){3-4}\cmidrule(lr){5-6}\cmidrule(lr){7-8}\cmidrule(l){9-10}
Profile & $\lambda$ & Mix & MaxEnt & Mix & MaxEnt & Mix & MaxEnt & Mix & MaxEnt \\
\midrule
Linear & 0.25 & 36.39 & 36.85 & 34.65 & 35.18 & 12.3 & 12.5 & 6.7 & 6.9 \\
 & 0.50 & 38.89 & 39.27 & 37.60 & 38.36 & 21.2 & 18.1 & 15.5 & 11.5 \\
 & 0.75 & 40.86 & 39.20 & 40.49 & 38.67 & 38.8 & 32.2 & 37.8 & 29.6 \\
\addlinespace[2pt]
Cosine & 0.25 & 39.50 & 40.33 & 38.44 & 39.73 & 19.2 & 18.4 & 13.2 & 13.6 \\
 & 0.50 & 41.39 & 42.00 & 40.79 & 40.79 & 31.0 & 27.4 & 27.0 & 23.0 \\
 & 0.75 & 42.08 & 42.15 & 41.17 & 41.02 & 40.4 & 39.3 & 39.6 & 38.4 \\
\addlinespace[2pt]
$1-\sqrt{\cdot}$ & 0.25 & 35.33 & 34.27 & 32.68 & 32.07 & 13.5 & 14.1 & 8.0 & 8.9 \\
 & 0.50 & 38.06 & 37.30 & 36.47 & 35.71 & 21.2 & 14.9 & 14.7 & 8.7 \\
 & 0.75 & 38.97 & 39.35 & 38.13 & 38.21 & 36.8 & 23.4 & 35.9 & 19.0 \\
\bottomrule
\end{tabular}
\end{table}

Separate MaxEnt controls matching the linear QAM endpoint's mean and
variance score slightly higher under both extraction rules. Their
mean-score advantages are 0.34 points on SmolLM3 and 0.68 on Prelude (Appendix~\ref{app:endpoint-maxent}).
These point estimates establish neither a significant difference nor
equivalence. They provide no empirical advantage for QAM over this
same-moment alternative. Quadratic exactness selects a complete rule
without guaranteeing a downstream ranking.

The index-variance view suggests a possible explanation for part of
QAM's long-window advantage. Compared with WSM, QAM concentrates the
merge weights into a narrower but still nondegenerate region of the
checkpoint window. This should not be interpreted as a causal claim
that lower variance is always better. Individual checkpoints have zero
index variance yet are often outperformed
(Table~\ref{tab:checkpoint-direct}), and the matched-moment controls
above show that the first two index moments do not fully determine downstream
performance.
\section{Discussion and Limitations}
\label{sec:discussion}

We design the merging coefficients based on a common transition model, while sparse Adam histories need not satisfy those assumptions.
The minimax result concerns parameter reconstruction, fixed $k$, and
$h\to0$ over a specified loss class. It gives neither optimal error
constants nor a downstream ordering or guarantee of reproducing decay
training. WSM's original evaluation
includes the \textbf{Warmup-Stable-Decay} (WSD) baseline, whereas a matched WSD comparison here would require additional training.

The empirical results cover two histories with shared checkpoints and
test examples. Task-wise maxima are
retrospective rather than one model's simultaneous scores.
Window length, spacing, and kernel shape are not
fully separated as changing merging granularity also changes QAM's effective width even on a fixed token interval. Prelude's severe generation failure should not be generalized to all WSM merges.
\section{Conclusion}
A saved history determines second-order reference behavior but leaves
third-order ambiguity, even for globally smooth strongly convex losses
and unrestricted reconstruction algorithms. QAM attains the sharp
uniform local rate and is uniquely selected among profile-based linear
rules by exact quadratic matching. Its sparse-Adam use yields practical
gains on two pretraining histories, especially at longer windows.
Matched-moment diagnostics show why consistency and downstream quality
remain distinct questions.

An important open question is to what extent this reconstruction viewpoint can approximate the reference endpoint of an actual learning-rate decay schedule,
rather than merely provide a principled criterion for coefficient design.
Stronger local models of the training dynamics may be needed to make this connection more faithful.

\label{main-text-end}

\clearpage
\appendix
\section{Proofs and Additional Analysis}
\label{app:proofs}

\subsection{Proof of local sequential consistency}
\label{app:common-proof}
\begin{proof}[Proof of Theorem~\ref{thm:common-consistency}]
The recorded checkpoints and the reference expand as
\begin{align}
\theta_i&=\theta_0+ih f_0+h^2\left[i a_0+\binom i2J_0f_0\right]+O(h^3),
\label{eq:common-recorded}\\
\phi_k^W&=\theta_0+hA_W f_0+h^2[A_W a_0+B_WJ_0f_0]+O(h^3).
\label{eq:common-reference-expansion}
\end{align}

    For fixed $k$ and sufficiently small $h$, bounded $f,a$ and the uniform
remainder keep both finite trajectories $O(h)$ from $\theta_0$ within
the assumed neighborhood. Taylor expansion gives
$f(\theta_0+u)=f_0+J_0u+O(\|u\|^2)$ and
$a(\theta_0+u)=a_0+O(\|u\|)$.
Induction on $i$ gives Eq.~\eqref{eq:common-recorded} as the coefficient
of $J_0f_0$ satisfies $C_{i+1}=C_i+i$, $C_0=0$, hence
$C_i=\binom i2$. Similarly,
$A_{j+1}=A_j+W_j$ and $B_{j+1}=B_j+W_jA_j$ give
Eq.~\eqref{eq:common-reference-expansion}.
For any fixed convex $p$, writing
$\mu_p=\sum_i ip_i$ and $\nu_p=\sum_i\binom i2p_i$ therefore yields
\begin{equation}
\sum_i p_i\theta_i-\phi_k^W
=h(\mu_p-A_W)f_0+h^2\bigl[(\mu_p-A_W)a_0
+(\nu_p-B_W)J_0f_0\bigr]+O(h^3).
\label{eq:common-general}
\end{equation}
This proves sufficiency in Theorem~\ref{thm:common-consistency}.
For necessity, consider the one-dimensional maps $F_h(x)=x+h$ at $x_0=0$
and then $F_h(x)=(1+h)x$ at $x_0=1$. The first forces
$\mu_p=A_W$ and the second forces $\nu_p=B_W$.
The identity $2\nu_p=\operatorname{Var}_p(I)+\mu_p^2-\mu_p$
gives the equivalent mean-and-variance condition.
The base tail sums give $\mu_c=A_W$ and
$\nu_c=\sum_j jW_j=\sum_{\ell<j}W_j$. Substituting these in
Eq.~\eqref{eq:common-general} proves Eq.~\eqref{eq:common-discrepancy}.
For QAM, $Q_W'(1)=A_W$ and $Q_W''(1)=2B_W$.
\end{proof}

\begin{remark}
The common-map assumption is an idealized model. It includes a finite block
of a fixed smooth deterministic update admitting this expansion, but
does not assert that a sparse Adam history obeys such a map in its
parameter state alone. Neither $h$ nor the expansion's remainder is
estimated from the experimental histories.
\end{remark}

\subsection{Proof of the Checkpoint-Information Bound}
\label{app:information-bound}

Recall that
\(\mathcal L_M\), defined in Eq.~\eqref{eq:loss-class}, consists of
\(C^4\) losses on \(\mathbb R^2\) satisfying
\[
\tfrac12 I\preceq\nabla^2L\preceq\tfrac52 I,
\qquad
\|\nabla L(0)\|\le 3,
\qquad
\|D^3L\|_\infty,\|D^4L\|_\infty\le M.
\]
Fix \(M>0\), \(k\ge2\), and \(W\in[0,1]^k\), with
\[
S_W=\sum_{\ell<j}W_jW_\ell(1-W_\ell)>0.
\]
The ordinary and reference trajectories start at the origin and satisfy
\[
\theta_{i+1}=\theta_i-h\nabla L(\theta_i),
\qquad
\phi_{j+1}=\phi_j-hW_j\nabla L(\phi_j),
\qquad
\theta_0=\phi_0=0.
\]
The observation is \(T_h(L)=(\theta_0,\ldots,\theta_k)\), and QAM is defined as
\[
Q_h(L)=\sum_{i=0}^k q_i\theta_i,
\qquad
\sum_{i=0}^kq_it^i
=\prod_{j=0}^{k-1}(1-W_j+W_jt).
\]

\begin{proof}[Proof of Theorem~\ref{thm:information-bound}]
We first bound the error of QAM uniformly over \(\mathcal L_M\).
We then construct two losses that give exactly the same observations
but whose reference endpoints differ at order \(h^3\).

\paragraph{Uniform upper bound.}
Write \(f=-\nabla L\). The assumptions give
\[
\|f(0)\|\le3,
\qquad
\|Df\|_\infty\le\tfrac52,
\qquad
\|D^2f\|_\infty,\|D^3f\|_\infty\le M.
\]
For an ordinary or reference update with weight \(w\in[0,1]\),
\[
\|z+hwf(z)\|
\le (1+\tfrac52h)\|z\|+3h.
\]
Thus, starting from zero and iterating at most \(k\) times yields
\[
\|z\|\le 3kh\exp(5kh/2).
\]
Thus every state is \(O(h)\), uniformly over \(L\in\mathcal L_M\)
and \(W\in[0,1]^k\).

Let
\[
a=f(0),\qquad B=Df(0),\qquad H=D^2f(0).
\]
Taylor expansion yields
\[
f(z)=a+Bz+\tfrac12H[z,z]+O(\|z\|^3),
\]
with a uniform remainder bound. Since each state is \(O(h)\),
the remainder contributes \(O(h^4)\) to one update.

For the ordinary trajectory, write
\[
\theta_i
=ih a+h^2c_iBa
+h^3\bigl[d_iB^2a+v_iH[a,a]\bigr]+O(h^4).
\]
Substituting this expression into
\(\theta_{i+1}=\theta_i+hf(\theta_i)\) gives
\[
c_{i+1}=c_i+i,
\qquad
d_{i+1}=d_i+c_i,
\qquad
v_{i+1}=v_i+\tfrac12i^2,
\]
with \(c_0=d_0=v_0=0\). Hence
\[
c_i=\binom i2,
\qquad
d_i=\binom i3,
\qquad
v_i=\binom i3+\tfrac12\binom i2.
\]
It follows that
\[
\theta_i
=ih a+h^2\binom i2Ba
+h^3\left[
\binom i3B^2a+
\left(\binom i3+\tfrac12\binom i2\right)H[a,a]
\right]+O(h^4).
\]

For the reference trajectory, define
\[
A_j=\sum_{\ell<j}W_\ell,
\qquad
E_{2,j}=\sum_{a<b<j}W_aW_b,
\qquad
E_{3,j}=\sum_{a<b<c<j}W_aW_bW_c.
\]
The same substitution into the reference trajectory gives
\[
\phi_j
=hA_ja+h^2E_{2,j}Ba
+h^3\bigl[E_{3,j}B^2a+R_jH[a,a]\bigr]+O(h^4),
\]
where the coefficients satisfy
\[
\begin{aligned}
A_{j+1}&=A_j+W_j,\\
E_{2,j+1}&=E_{2,j}+W_jA_j,\\
E_{3,j+1}&=E_{3,j}+W_jE_{2,j},\\
R_{j+1}&=R_j+\tfrac12W_jA_j^2.
\end{aligned}
\]
All four coefficients start at zero. If \(e_m\) denotes the
\(m\)-th elementary symmetric polynomial in \(W_0,\ldots,W_{k-1}\),
then
\[
\phi_k
=he_1a+h^2e_2Ba
+h^3\left[
e_3B^2a+\tfrac12\sum_jW_jA_j^2H[a,a]
\right]+O(h^4).
\]
All remainder bounds above are uniform because \(k\) is fixed and
the derivative bounds hold throughout the class.

To compute the corresponding coefficients for QAM, expand its
generating polynomial at \(t=1\):
\[
\sum_iq_i(1+s)^i=\prod_j(1+W_js).
\]
Comparing coefficients of \(s^m\) gives
\[
\sum_iq_i\binom im=e_m.
\]
Consequently,
\[
Q_h(L)
=he_1a+h^2e_2Ba
+h^3\left[
e_3B^2a+\left(e_3+\tfrac12e_2\right)H[a,a]
\right]+O(h^4).
\]
Also,
\[
\begin{aligned}
\sum_jW_jA_j^2
&=\sum_jW_j\left(
\sum_{\ell<j}W_\ell^2
+2\sum_{a<b<j}W_aW_b
\right)\\
&=\sum_{\ell<j}W_jW_\ell^2+2e_3.
\end{aligned}
\]
Subtracting the reference expansion therefore yields
\[
Q_h(L)-\phi_k(L)
=\tfrac12h^3S_WH[a,a]+O_{k,M}(h^4).
\]
Since \(\|H[a,a]\|\le9M\), this proves the uniform \(O(h^3)\)
upper bound.

\textbf{Two losses with the same checkpoint record.}
The lower bound uses a perturbation whose value and gradient vanish
on the ordinary trajectory, but its gradient does not vanish
on the reference trajectory.

Consider
\[
r(x,y)=y-x^2-2x.
\]
Choose a fixed smooth cutoff
\(\chi\in C_c^\infty(\mathbb R^2)\) that equals one on the unit
ball and zero outside the ball of radius two. Set
\[
P=\tfrac12\chi r^2,
\qquad
K=\max\left\{
1,\|D^2P\|_\infty,\|D^3P\|_\infty,\|D^4P\|_\infty
\right\},
\qquad
\varepsilon=\frac{\min\{1/2,M\}}{K}.
\]
Both \(P\) and \(\varepsilon>0\) are independent of \(h\). \(K\) is evidently finite since \(P\) is compactly supported.

For \(0\le h\le1/2\), define
\[
L_h^{(0)}(x,y)
=\tfrac12(1+x)^2+\tfrac12(2-h)(1+y)^2,
\qquad
L_h^{(1)}=L_h^{(0)}+\varepsilon P.
\]
We check that both losses belong to the same fixed class
\(\mathcal L_M\). The Hessian of \(L_h^{(0)}\) is
\(\operatorname{diag}(1,2-h)\), which lies between \(I\) and \(2I\).
Moreover,
\[
\|\varepsilon D^2P\|_\infty\le\tfrac12,
\qquad
\|\varepsilon D^3P\|_\infty,
\|\varepsilon D^4P\|_\infty\le M.
\]
These bounds give the required Hessian and higher-derivative bounds
for \(L_h^{(1)}\) everywhere on \(\mathbb R^2\).
Finally, since \(\nabla P(0)=0\), we have
\[
\|\nabla L_h^{(b)}(0)\|
=\sqrt{1+(2-h)^2}<3,
\qquad b\in\{0,1\}.
\]

Ordinary GD on \(L_h^{(0)}\) gives the exact trajectory
\[
\theta_i=\bigl((1-h)^i-1,\,(1-h)^{2i}-1\bigr).
\]
Thus \(r(\theta_i)=0\) for every \(i\). On the entire set
\(\{r=0\}\), we have
\[
P=0,
\qquad
\nabla P=\tfrac12r^2\nabla\chi+\chi r\nabla r=0.
\]
The two losses therefore have the same gradient at every ordinary
iterate. Starting from the same initial state, they generate exactly
the same checkpoint record by induction. Their loss values and
gradients also agree at every recorded checkpoint, including
\(\theta_k\).

\textbf{Separation of the reference endpoints.}
The uniform state bound proved above applies to both losses.
For sufficiently small \(h\), both reference trajectories lie in the
unit ball, where \(\chi=1\). Their negative gradients there are
\[
f_h(x,y)=\bigl(-(1+x),\,(-2+h)(1+y)\bigr),
\qquad
g_h=f_h-\varepsilon G,
\qquad
G=r\nabla r.
\]

First consider the reference trajectory for \(L_h^{(0)}\).
Write
\[
z_j=\phi_j(L_h^{(0)})=(x_j,y_j),
\qquad
X_j=1+x_j,
\qquad
Y_j=1+y_j.
\]
The updates give
\[
X_{j+1}=(1-hW_j)X_j,
\qquad
Y_{j+1}=(1-2hW_j+h^2W_j)Y_j.
\]
Since \(r(z_j)=Y_j-X_j^2\), the residual
\(r_j=r(z_j)\) satisfies the exact recurrence
\[
r_{j+1}
=(1-2hW_j+h^2W_j)r_j
+h^2W_j(1-W_j)X_j^2.
\]
Here \(X_j=1+O(h)\) and \(r_0=0\). Induction yields
\[
r_j=h^2V_j+O(h^3),
\qquad
V_j=\sum_{\ell<j}W_\ell(1-W_\ell).
\]
Also noting that,
\[
\nabla r(z_j)=(-2X_j,1)=(-2,1)+O(h),
\]
and hence
\[
G(z_j)=h^2V_j(-2,1)+O(h^3).
\]

Now let
\[
\delta_j=\phi_j(L_h^{(1)})-\phi_j(L_h^{(0)}),
\qquad
D_h=\operatorname{diag}(-1,-2+h).
\]
Since \(f_h\) is affine, subtracting the two reference updates gives
the exact recurrence
\[
\delta_{j+1}
=(I+hW_jD_h)\delta_j
-\varepsilon hW_jG(z_j+\delta_j),
\qquad
\delta_0=0.
\]
Both trajectories lie in the unit ball, where \(G\) has a fixed
Lipschitz bound. Since \(G(z_j)=O(h^2)\), this recurrence implies
\[
\|\delta_{j+1}\|
\le (1+Ch)\|\delta_j\|+Ch^3
\]
for a constant \(C\) independent of \(h\). Iterating for the fixed
number \(k\) of steps gives \(\delta_j=O(h^3)\).

We can now identify the leading term. The bound on \(\delta_j\) gives
\[
hW_jD_h\delta_j=O(h^4),
\qquad
hW_j\bigl[G(z_j+\delta_j)-G(z_j)\bigr]=O(h^4).
\]
Thus the difference recurrence reduces to
\[
\begin{aligned}
\delta_{j+1}
&=\delta_j-\varepsilon hW_jG(z_j)+O(h^4)\\
&=\delta_j+\varepsilon h^3W_jV_j(2,-1)+O(h^4).
\end{aligned}
\]
Summing over \(j\), and using
\[
\sum_jW_jV_j
=\sum_{\ell<j}W_jW_\ell(1-W_\ell)=S_W,
\]
we obtain
\[
\phi_k(L_h^{(1)})-\phi_k(L_h^{(0)})
=\varepsilon S_Wh^3(2,-1)+O(h^4).
\]
Throughout computations, every remainder constant is independent of \(h\), and the number of
steps is fixed.

\textbf{From indistinguishability to the lower bound.}
Any deterministic algorithm returns the same output on the two
identical records. By the triangle inequality, at least one of
its two errors is at least half the distance between the targets.
Therefore,
\[
\begin{aligned}
\sup_{L\in\mathcal L_M}
\|A(T_h(L),h,W)-\phi_k(L)\|
&\ge
\tfrac12\|\phi_k(L_h^{(1)})-\phi_k(L_h^{(0)})\|\\
&\ge
\tfrac{\varepsilon\sqrt5\,S_W}{2}h^3-Ch^4.
\end{aligned}
\]
Taking the infimum over algorithms proves the lower bound.
For example, after reducing \(h_0\), one may take
\(c=\varepsilon\sqrt5\,S_W/4>0\).
The same argument applies when checkpoint losses and gradients
are included in the record, since those observations also agree.

For a randomized algorithm, the common record induces the same
output distribution for both losses. If \(Z\) has that distribution,
then
\[
\max_{b\in\{0,1\}}
\mathbb E\|Z-\phi_k(L_h^{(b)})\|
\ge
\tfrac12\|\phi_k(L_h^{(1)})-\phi_k(L_h^{(0)})\|.
\]
Thus the lower bound also holds for worst-case expected norm error.
\end{proof}

\begin{remark}
    The loss class \(\mathcal L_M\) is fixed independently of \(h\).
For each \(h\), the supremum over that class may select a different
pair \(L_h^{(0)},L_h^{(1)}\). The theorem therefore gives a uniform
worst-case lower bound, rather than a pointwise lower bound for a
single fixed loss.

If \(S_W=0\), every positive weight that precedes another positive
weight must equal one. After zero weights are removed, the profile
consists of \(m\) ones followed by at most one fractional weight
\(\alpha\in(0,1)\). With no fractional weight, the reference equals
\(\theta_m\). Otherwise, it equals
\[
(1-\alpha)\theta_m+\alpha\theta_{m+1}.
\]
QAM reconstructs these references exactly. The all-zero profile
returns \(\theta_0=0\).

If the derivative bound is instead set to \(M=0\), every loss in
the class is quadratic and QAM is exact.
\end{remark}

\begin{remark}
    The theorem holds for fixed \(k,W\), and \(M>0\) as \(h\to0\).
Allowing these quantities to vary with \(h\) requires separate
control of the constants. The conclusion concerns the optimal
order of parameter reconstruction error. It does not establish
an optimal leading constant, a downstream performance ordering,
or a guarantee of reproducing decay training.
\end{remark}

\subsection{The sharp nonlinear limit of fixed checkpoint weights}
\label{app:order-barrier}

For fixed Euler steps, a more specific coefficient obstruction is:

\begin{theorem}[Nonlinear order barrier]
\label{thm:order-barrier}
Fix $k,W$ and let $f$ range over $C^3$ vector fields with bounded local
derivatives. If $S_W>0$, no real coefficients $p_i(W)$ independent of
$h,f$ and the initial state satisfy
$\sum_i p_i\theta_i-\phi_k^W=O(h^4)$ for all such Euler trajectories.
QAM has the leading remainder
\begin{equation}
\theta_{\mathrm{QAM}}-\phi_k^W
=\tfrac12h^3 S_W D^2f(\theta_0)[f(\theta_0),f(\theta_0)]+O(h^4).
\label{eq:qam-third-remainder}
\end{equation}
If $S_W=0$, QAM matches the reference exactly for every $f$.
\end{theorem}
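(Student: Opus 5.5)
The plan is to push the third-order expansion from the proof of Theorem~\ref{thm:information-bound} through for a general $C^3$ field $f$ and an arbitrary initial state $\theta_0$. Write $a=f(\theta_0)$, $B=Df(\theta_0)$ and $H=D^2f(\theta_0)$. The recursions already derived there hold verbatim, since they used only a Taylor expansion of $f$ with uniformly bounded derivatives on an $O(h)$ neighborhood. They give
\[
\theta_i-\theta_0=iha+h^2\tbinom i2Ba+h^3\Bigl[\tbinom i3B^2a+\bigl(\tbinom i3+\tfrac12\tbinom i2\bigr)H[a,a]\Bigr]+O(h^4)
\]
and
\[
\phi_k^W-\theta_0=he_1a+h^2e_2Ba+h^3\Bigl[e_3B^2a+\bigl(e_3+\tfrac12\textstyle\sum_{\ell<j}W_jW_\ell^2\bigr)H[a,a]\Bigr]+O(h^4).
\]
For QAM, the identity $\sum_i q_i\binom im=e_m$ then yields the remainder \eqref{eq:qam-third-remainder} immediately. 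Here we note that $\tfrac12(e_2-\sum_{\ell<j}W_jW_\ell^2)=\tfrac12 S_W$.

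For the barrier, take arbitrary real $p_i$ and define the index moments $\mu_m=\sum_i p_i\binom im$ for $m=0,\dots,3$. By the expansions above, the error equals
\[
(\mu_0-1)\theta_0+h(\mu_1-e_1)a+h^2(\mu_2-e_2)Ba+h^3\bigl[(\mu_3-e_3)B^2a+(\mu_3+\tfrac12\mu_2-e_3-\tfrac12\sum W_jW_\ell^2)H[a,a]\bigr]+O(h^4).
\]
We then choose test fields so that each coefficient is isolated.
\begin{enumerate}[label=(\roman*)]
\item A nonzero initial state with $f\equiv0$ forces $\mu_0=1$.
\item $f\equiv$ const forces $\mu_1=e_1$.
\item The scalar linear field $f(x)=x$ at $x_0=1$ gives $a=1$, $B=1$, $H=0$. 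This forces $\mu_2=e_2$ and, at order $h^3$, $\mu_3=e_3$. The linear trajectories are exact polynomials in $h$, so no remainder hides these terms.
\item A scalar quadratic field $f(x)=x^2$ at $x_0=1$, or better $f(x)=1+x^2/2$ at $x_0=0$ so that $B=0$, $H=1$, $a=1$, makes the $H$ coefficient the only surviving $h^3$ term. Given $\mu_2=e_2$ and $\mu_3=e_3$, it equals $\tfrac12 S_W\neq0$, which contradicts an $O(h^4)$ error.
\end{enumerate}
These fields are locally $C^3$ with bounded derivatives, which suffices because the statement only requires bounded local derivatives.

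For the case $S_W=0$, I would invoke the structural description from the remark after Theorem~\ref{thm:information-bound}. After deleting zero weights, the profile is a run of ones followed by at most one fractional $\alpha$. Deleting a zero weight leaves both the reference and $Q_W$ unchanged, since the corresponding factor of $Q_W$ is $1$. Unit weights reproduce ordinary Euler steps exactly, and $Q_W(t)=t^m(1-\alpha+\alpha t)$. Both sides therefore equal $(1-\alpha)\theta_m+\alpha\theta_{m+1}$, because a single relaxed Euler step is affine in its weight. That equality is exact for every $f$.

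The main obstacle I expect is the bookkeeping in (iv): we must ensure that the $B^2a$ and $H[a,a]$ channels are genuinely independent, so that the constraint from (iii) is not contaminated, and that the remainders stay uniform. I would handle this by choosing the test fields so that only one of $B$ and $H$ is nonzero at $\theta_0$. Where the fields are polynomial, I would verify the order-$h^3$ coefficients by exact computation.
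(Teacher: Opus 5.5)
Your proposal is correct and follows the paper's proof essentially step for step. It uses the same third-order expansions and the same four test fields: $f\equiv0$ from a nonzero start, a constant field, $f(x)=x$ at $x_0=1$ (which also forces the third factorial moment to equal $e_3$), and $f(x)=1+x^2/2$ at $x_0=0$ (which isolates the $\tfrac12 S_W$ branched term). It then obtains QAM's remainder from the identity $\sum_i q_i\binom im=e_m$, and handles the $S_W=0$ case by the same reduction to $(1-\alpha)\theta_m+\alpha\theta_{m+1}$.
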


The following calculation uses the classical elementary-differential
expansion underlying numerical order conditions~\citep{butcher1963coefficients}.
The target here is a prescribed sequence of relaxed Euler updates,
rather than the exact flow of a differential equation. Fix $k,W,x$,
let $f\in C^3$ on a neighborhood of $x$, with bounded derivatives there,
and set $f_0=f(x)$, $J=Df(x)$, $K=D^2f(x)$.
For sufficiently small $h$, both finite trajectories remain in that
neighborhood. Write
\[
C_W=\sum_{a<b<c}W_aW_bW_c,\quad
D_W=\sum_{\ell<j}W_jW_\ell^2,\quad
S_W=B_W-D_W=\sum_{\ell<j}W_jW_\ell(1-W_\ell).
\]
Taylor expansion gives the following third-order formulas:
\begin{align}
\theta_i={}&x+ihf_0+h^2\binom i2Jf_0\notag\\
&+h^3\left[\binom i3J^2f_0+
 \left(\binom i3+\tfrac12\binom i2\right)K[f_0,f_0]\right]+O(h^4),
\label{eq:euler-third}\\
\phi_k^W={}&x+hA_Wf_0+h^2B_WJf_0\notag\\
&+h^3\left[C_WJ^2f_0+
 \left(C_W+\tfrac12D_W\right)K[f_0,f_0]\right]+O(h^4).
\label{eq:reference-third}
\end{align}
To verify the first identity, the chain coefficient obeys
$u_{i+1}=u_i+\binom i2$ and hence $u_i=\binom i3$.
The branched coefficient obeys $v_{i+1}=v_i+i^2/2$ and hence
$v_i=i(i-1)(2i-1)/12=\binom i3+\tfrac12\binom i2$.
For the reference, the chain coefficient is
$\sum_j W_j B_j=C_W$. Its branched coefficient is
$\tfrac12\sum_jW_j A_j^2=C_W+D_W/2$, where
$A_j=\sum_{\ell<j}W_\ell$ and
$B_j=\sum_{a<b<j}W_aW_b$.

For any normalized real coefficients $p$ satisfying the two
second-order conditions, put $M_3(p)=\sum_i p_i\binom i3$.
Subtracting the preceding expansions yields
\begin{equation}
\sum_i p_i\theta_i-\phi_k^W
=h^3\left[(M_3(p)-C_W)J^2f_0+
\left(M_3(p)-C_W+\tfrac12S_W\right)K[f_0,f_0]\right]+O(h^4).
\label{eq:third-discrepancy}
\end{equation}
Thus a single free third factorial moment affects both elementary
differentials by the same amount. Their required values differ by
$S_W/2$.

\begin{proof}[Proof of Theorem~\ref{thm:order-barrier}]
Universal $O(h^4)$ matching first forces normalization and the two
lower-order moments. Normalization follows by taking $f=0$ and a
nonzero initial state. The scalar fields $f(x)=1$ and $f(x)=x$ force
the first and second moments; the latter, initialized at $x=1$,
also forces $M_3(p)=C_W$. Taking $f(x)=1+x^2/2$ at $x=0$ then gives
$J=0$ and $K[f_0,f_0]=1$, so Eq.~\eqref{eq:third-discrepancy}
forces $S_W=0$. This argument applies even to signed coefficients.
Conversely, the generating polynomial gives
$M_3(q)=Q_W'''(1)/6=C_W$, and substitution gives
Eq.~\eqref{eq:qam-third-remainder}.

Each summand in $S_W$ is nonnegative. Therefore $S_W=0$ if and only
if every positive entry that precedes a later positive entry is one.
After zero entries are removed, the profile is consequently a string
of $m$ ones followed by at most one fractional entry $w$.
The reference then takes $m$ ordinary steps and one final relaxation,
so $\phi_k^W=(1-w)\theta_m+w\theta_{m+1}$ for every $f$ and every
step size for which the iterates exist. QAM has precisely these two
coefficients; if no fractional entry occurs, it selects $\theta_m$.
For a nonincreasing profile this is exactly the pattern
$(1,\ldots,1,w,0,\ldots,0)$. This proves the sharp exception.
\end{proof}

As an exact two-update illustration, take $k=2$, $x=0$ and
$f(x)=1+x^2/2$. The ordinary states are $0,h,2h+h^3/2$, and
\[
\theta_{\rm QAM}-\phi_2^W
=\tfrac12h^3W_0W_1(1-W_0).
\]
The discrepancy already occurs for a one-dimensional gradient field,
since $f=-L'$ for $L(x)=-x-x^3/6$.
This calculation concerns fixed linear weights. Theorem~\ref{thm:information-bound}
establishes a stronger uniform lower bound for arbitrary reconstruction
algorithms with the same checkpoint information; additional off-trajectory
gradient queries change that information model.

\subsection{Consecutive GD as a specialization}
\label{app:gd-specialization}
For a fixed objective $L$, the ordinary and reference GD trajectories are
\begin{equation}
\theta_{i+1}=\theta_i-\eta\nabla L(\theta_i),
\qquad i=0,\ldots,k-1.
\label{eq:qam-local-map}
\end{equation}
\begin{equation}
\phi_0^W=\theta_0,\qquad
\phi_{j+1}^W=\phi_j^W-\eta W_j\nabla L(\phi_j^W),
\qquad j=0,\ldots,k-1.
\label{eq:qam-reference}
\end{equation}
\begin{assumption}[Fixed smooth objective and finite GD trajectories]
\label{ass:qam-local-dynamics}
The objective $L$ is fixed and belongs to $C^3$ on a neighborhood
of $\theta_0$, with locally bounded derivatives through order three.
Both trajectories follow Eqs.~\eqref{eq:qam-local-map}
and~\eqref{eq:qam-reference}. The initial state, $k$, and $W$
remain fixed as $\eta\to0$.
\end{assumption}
\begin{theorem}[Characterization of local second-order consistency]
\label{thm:QAM}
Fix $k\geq1$, $W\in[0,1]^k$, and a convex coefficient vector
$p=(p_0,\ldots,p_k)$ independent of $\eta$, $L$, and $\theta_0$.
The bound
\[
\sum_{i=0}^kp_i\theta_i-\phi_k^W=O(\eta^3)
\]
holds as $\eta\to0$ for every fixed objective and initial state
satisfying Assumption~\ref{ass:qam-local-dynamics} if and only if
\begin{equation}
\mathbb E_p[I]=A_W,\qquad
\mathbb E_p\!\left[\binom I2\right]=B_W.
\label{eq:qam-consistency-conditions}
\end{equation}
Here $\Pr_p(I=i)=p_i$. Equivalently, $p$ has the same checkpoint-index mean and variance
as $q$. In particular,
\begin{equation}
\theta_{\mathrm{QAM}}-\phi_k^W=O(\eta^3).
\label{eq:qam-local-error}
\end{equation}
\end{theorem}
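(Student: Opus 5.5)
The plan is to reduce Theorem~\ref{thm:QAM} to Theorem~\ref{thm:common-consistency}. The GD trajectories in Eqs.~\eqref{eq:qam-local-map} and~\eqref{eq:qam-reference} fit the common-transition model with $h=\eta$, $F_\eta(\theta)=\theta-\eta\nabla L(\theta)$, $f=-\nabla L$ and $a\equiv0$. The relaxed reference step $\phi+W_j[F_\eta(\phi)-\phi]=\phi-\eta W_j\nabla L(\phi)$ is exactly the reference GD step in Eq.~\eqref{eq:qam-reference}. Under Assumption~\ref{ass:qam-local-dynamics}, $L\in C^3$ with bounded derivatives gives $f\in C^2$ with bounded derivatives, and trivially $a\in C^1$. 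The expansion in Eq.~\eqref{eq:common-map} then holds with zero remainder.

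\textbf{Sufficiency.} I will invoke the expansions~\eqref{eq:common-recorded}--\eqref{eq:common-reference-expansion} and the general discrepancy formula~\eqref{eq:common-general} with $a_0=0$. This gives
\[
\sum_i p_i\theta_i-\phi_k^W=\eta(\mu_p-A_W)f_0+\eta^2(\nu_p-B_W)J_0f_0+O(\eta^3).
\]
Both coefficients vanish under~\eqref{eq:qam-consistency-conditions}. For QAM, $Q_W'(1)=A_W$ and $Q_W''(1)=2B_W$ give Eq.~\eqref{eq:qam-local-error}.

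\textbf{Necessity.} The sufficiency step is only a specialization, so the real work is here. The test maps used for Theorem~\ref{thm:common-consistency} must now be realized as gradient fields of admissible objectives, which is where the argument could go wrong; in one dimension every field is a gradient, so this works directly.
\begin{itemize}
\item For the mean, take $L(x)=-x$ on $\mathbb R$ with $\theta_0=0$. Then $\theta_i=i\eta$ and $\phi_k^W=A_W\eta$ exactly, so the error is $\eta(\mu_p-A_W)$. An $O(\eta^3)$ bound forces $\mu_p=A_W$.
\item For the second moment, take $L(x)=-x^2/2$ with $\theta_0=1$. Then $\theta_i=(1+\eta)^i$ and $\phi_k^W=\prod_j(1+\eta W_j)$. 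Using $\mu_p=A_W$, the $\eta^2$ coefficient of the error is $\nu_p-e_2=\nu_p-B_W$, so this moment is forced as well.
\end{itemize}
These objectives are smooth, so Assumption~\ref{ass:qam-local-dynamics} is satisfied; only a local neighborhood is required, so neither convexity nor global boundedness is needed. If boundedness of derivatives on all of $\mathbb R$ were demanded, a smooth cutoff away from the finite trajectory would handle it.

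\textbf{Moment reformulation.} Finally, the equivalence with matching QAM's mean and variance follows from $2\nu_p=\operatorname{Var}_p(I)+\mu_p^2-\mu_p$, together with $\mu_q=A_W$ and $\nu_q=B_W$.
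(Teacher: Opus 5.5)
Your proposal is correct and takes essentially the same route as the paper: you specialize Theorem~\ref{thm:common-consistency} with $h=\eta$, $f=-\nabla L$, $a=0$ for sufficiency, and you force the two moment conditions with one-dimensional linear and quadratic objectives. Your sign-flipped choices $L(x)=-x$ and $L(x)=-x^2/2$ are equivalent to the paper's $L(x)=x$ and $L(x)=x^2/2$, and they reproduce the test maps $x+h$ and $(1+h)x$ exactly; the paper adds only that these scalar examples embed in one coordinate when the dimension exceeds one.
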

This is the $h=\eta$, $f=-\nabla L$, $a=0$ case of
Theorem~\ref{thm:common-consistency}.

\subsection{Proof of Theorem~\ref{thm:qam-quadratic-exactness}}
\label{app:gd-derivation}

\subsubsection{Exactness by composing quadratic GD updates}
\label{app:quadratic-gd-construction}

\begin{proof}[Proof of exactness]
Fix $k\geq1$, $W\in[0,1]^k$, $\eta>0$, and a quadratic objective
as in Eq.~\eqref{eq:qam-quadratic-model}. Its gradient is
$\nabla L(\theta_0+u)=g_0+H_0u$ for every $u$, with arbitrary $g_0$
and symmetric $H_0$. In displacement coordinates
$u=\theta-\theta_0$, introduce
\begin{equation}
\mathcal M_\eta=
\begin{pmatrix}
I-\eta H_0&-\eta g_0\\
0&1
\end{pmatrix}.
\label{eq:qgd-augmented-matrix}
\end{equation}
Ordinary GD acts on $(u,1)^\top$ through $\mathcal M_\eta$;
a reference step with learning rate $\eta W_j$ acts through
\[
\begin{pmatrix}
I-\eta W_jH_0&-\eta W_jg_0\\
0&1
\end{pmatrix}
=(1-W_j)I+W_j\mathcal M_\eta.
\]
These matrices are polynomials in the same matrix and hence
commute. The generating polynomial in Eq.~\eqref{eq:qam-weights}
therefore gives the exact matrix identity
\begin{equation}
\prod_{j=0}^{k-1}\bigl[(1-W_j)I+W_j\mathcal M_\eta\bigr]
=Q_W(\mathcal M_\eta)=\sum_{i=0}^kq_i\mathcal M_\eta^i.
\label{eq:qgd-matrix-identity}
\end{equation}
Both trajectories start from $\theta_0$, so applying this identity
to $(0,1)^\top$ yields
\[
\begin{pmatrix}\phi_k^W-\theta_0\\1\end{pmatrix}
=\sum_{i=0}^kq_i
\begin{pmatrix}\theta_i-\theta_0\\1\end{pmatrix}.
\]
Since $\sum_iq_i=Q_W(1)=1$, the upper coordinates prove
Eq.~\eqref{eq:qam-quadratic-endpoint}. The term $-\eta g_0$
retains the linear part of the quadratic objective throughout
the calculation. No stationary point or inverse of $H_0$ is
needed: $H_0$ may be singular or indefinite. This finite
algebraic identity requires neither a small step size nor
convergence of the GD trajectory.
\end{proof}

\subsubsection{Uniqueness across quadratic objectives}
\label{app:qam-uniqueness}

\begin{proof}[Proof of uniqueness]
Fix $k$, $W$, and $\eta>0$, and suppose that the same real
coefficient vector $p(W)$ satisfies the endpoint identity for
every quadratic loss and initial state. Keep this vector fixed
while varying the one-dimensional convex losses
\[
L_\lambda(x)=\tfrac12\lambda x^2,
\qquad x_0=1,\qquad 0<\lambda<1/\eta.
\]
Writing $t=1-\eta\lambda$ gives $x_i=t^i$ and
\[
\phi_k^W=\prod_{j=0}^{k-1}(1-\eta W_j\lambda)
=\prod_{j=0}^{k-1}(1-W_j+W_jt)=Q_W(t).
\]
Because $t$ ranges over $(0,1)$, exact matching requires
\[
\sum_{i=0}^kp_i(W)t^i=Q_W(t)\qquad\text{for every }t\in(0,1).
\]
Two polynomials agreeing on this set have identical coefficients,
so $p_i(W)=q_i(W)$ for all $i$. No convexity or normalization
assumption on $p$ is needed. If the dimension is fixed above one,
the same scalar example embeds in a single coordinate.
\end{proof}

The uniqueness requirement is universal across quadratic
objectives.

\subsection{Proof of Theorem~\ref{thm:QAM} and the variance identity}
\label{A.1}

\begin{proof}[Proof of Theorem~\ref{thm:QAM}]
Under Assumption~\ref{ass:qam-local-dynamics}, the GD map
satisfies Assumption~\ref{ass:common-map} with
$h=\eta$, $f=-\nabla L$, and $a=0$.
The expansions in Appendix~\ref{app:common-proof} therefore
give sufficiency and QAM's local error bound.
The base-discrepancy formula follows from
Eq.~\eqref{eq:common-discrepancy} under the same substitution.

For necessity, $L(x)=x$ at $x_0=0$ forces the first moment
condition. With that condition satisfied, $L(x)=x^2/2$
at $x_0=1$ forces the second. These examples embed in one
coordinate in any higher dimension.
The equivalent mean-and-variance condition follows from
the moment identity in Appendix~\ref{app:common-proof}.
\end{proof}

\paragraph{Scope of the necessity statement.}
The necessity statement concerns one fixed coefficient rule across
objectives, and a particular trajectory may admit other matching weights.
As noted in Section~\ref{sec:qam-construction}, second-order consistency alone leaves a
$(k-2)$-dimensional family of convex solutions when $k\ge3$ and
$W_j\in(0,1)$. Theorem~\ref{thm:qam-quadratic-exactness} selects $q$
by the stronger requirement of exact matching across quadratics.

\begin{proof}[Proof of Eq.~\eqref{eq:qam-variance-gap}]
This calculation uses only the coefficients $c,q$ and their
common input profile $W$. For $I_c\sim c$ and $I_q\sim q$, the
coefficient sums in Appendix~\ref{app:common-proof} give the common mean
$\mathbb E[I_c]=\mathbb E[I_q]=A_W$. Consequently,
\[
\begin{aligned}
\operatorname{Var}(I_c)-\operatorname{Var}(I_q)
&=2(\nu_c-\nu_q)\\
&=2\sum_{\ell<j}(W_j-W_\ell W_j)
=2\Delta_2(W)\geq0.
\end{aligned}
\]
The inequality follows from $0\leq W_j\leq1$.
\end{proof}

\clearpage

\section{Evaluation Benchmarks}
\label{app:experimental-details}

Main-text Table~\ref{tab:settings} gives the full checkpoint ranges and storage
intervals. A Last-$n$ window uses the last $n$ chronological checkpoints
and therefore spans $n-1$ intervals.

The SmolLM3 checkpoint steps are $3{,}640{,}000+40{,}000i$,
$i=0,\ldots,14$.
The Prelude steps are $859{,}200+2{,}400i$, $i=0,\ldots,39$.
These progressions specify the complete checkpoint grids.

Table~\ref{tab:groups} specifies all 15 tasks. Accuracy tasks use
\texttt{lm\_eval v0.4.13}. MMLU and GSM8K use 5-shot evaluation, and the other accuracy
tasks use zero-shot evaluation. Capability-group means weight accuracy
tasks by evaluation-set size. Math averages the flexible and strict
GSM8K extraction scores. MuTual uses MRR.
The available matched-kernel protocols are in
Appendix~\ref{app:matched_second_moment_controls}.

\begin{table}[H]
\centering
\small
\caption{Capability groups.}
\label{tab:groups}
\begin{tabular}{@{}L{3.1cm}L{7.6cm}l@{}}
\toprule
Group & Tasks & Metric \\
\midrule
Knowledge & ARC-Easy, ARC-Challenge~\citep{clark2018arc}, OpenBookQA~\citep{mihaylov2018openbookqa}, SciQ~\citep{welbl2017sciq}, MMLU (5-shot)~\citep{hendrycks2021mmlu} & acc. \\
Commonsense & HellaSwag~\citep{zellers2019hellaswag}, PIQA~\citep{bisk2020piqa}, COPA~\citep{gordon2012copa}, WinoGrande~\citep{sakaguchi2020winogrande} & acc. \\
Reading & BoolQ~\citep{clark2019boolq} & acc. \\
Math & GSM8K flexible, GSM8K strict~\citep{cobbe2021training} & exact match \\
Semantics & SST-2~\citep{socher2013recursive}, WiC~\citep{pilehvar2019wic}, WSC~\citep{levesque2012winograd} & acc. \\
Dialogue & MuTual~\citep{cui2020mutual} & MRR \\
\bottomrule
\end{tabular}
\end{table}

For each trajectory, window, and profile, QAM and WSM merge the
same checkpoints. Merges are accumulated in \texttt{fp32} and exported
in \texttt{bf16}, with no additional training, calibration, or prompt
tuning.

\clearpage

\section{Checkpoint Weights}
\label{app:weights}

Figure~\ref{fig:coefficient-rules} shows the WSM and QAM weights
defined in Eqs.~\eqref{eq:inverse-tail-sum}
and~\eqref{eq:qam-weights} across the window lengths and profiles
used in our experiments.

\begin{figure}[H]
\centering
\includegraphics[width=\textwidth]{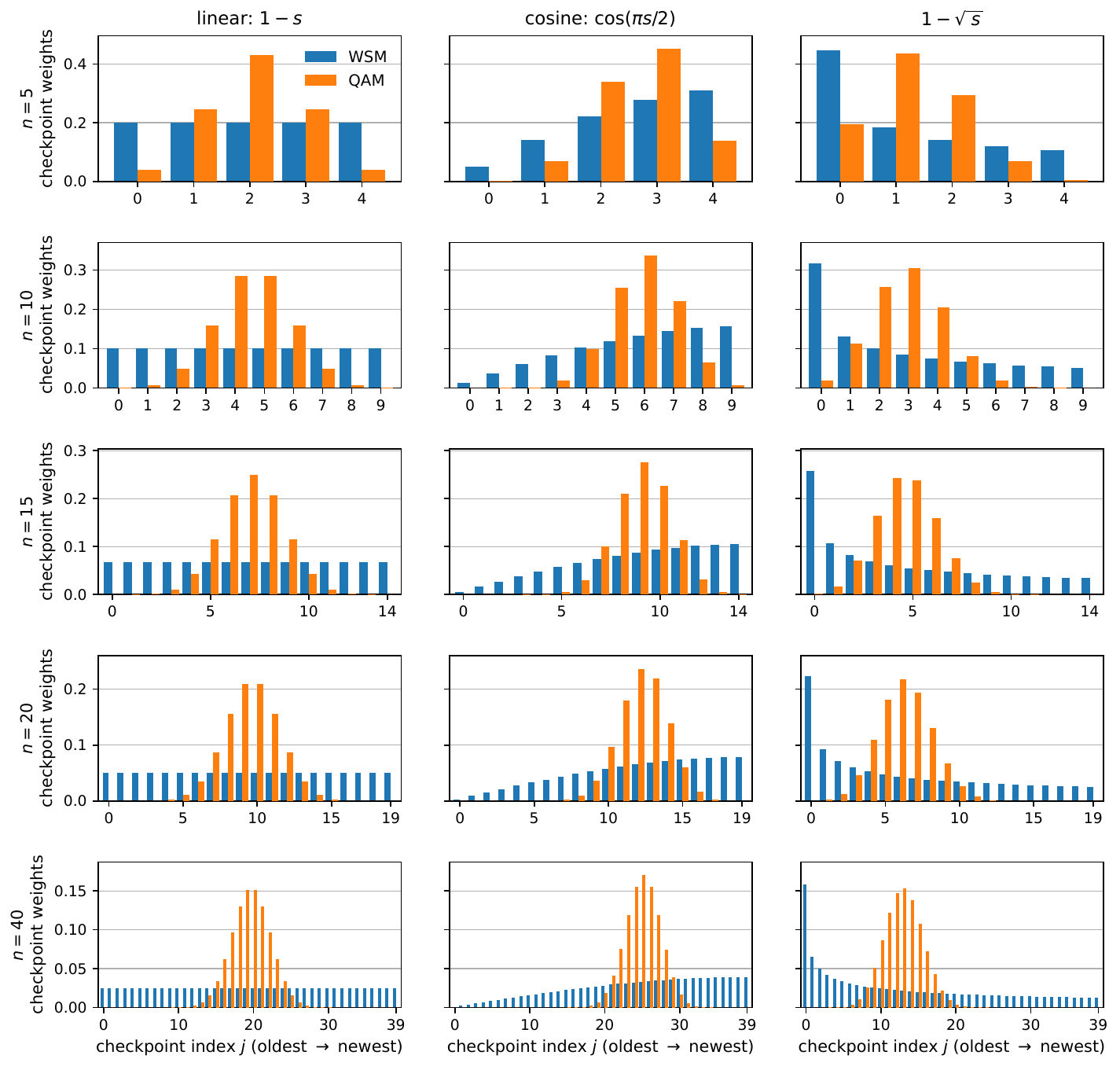}
\caption{WSM and QAM checkpoint weights.
Rows correspond to windows containing $n=5,10,15,20,40$
checkpoints, and columns correspond to the three prescribed
profiles. Checkpoint indices increase from oldest to newest.}
\label{fig:coefficient-rules}
\end{figure}

\clearpage

\section{Moment identities for the interpolation diagnostic}
\label{app:lambda-moments}

For a coefficient distribution $p$, let $\mu(p)$ and
$\mathrm{Var}(p)$ denote the mean and variance of its checkpoint
index. Appendix~\ref{app:common-proof} gives
$\mu(c)=\mu(q)=A_W$.

Consider the interpolation used in
Section~\ref{sec:degradation-diagnostic},
\[
p^{(\lambda)}=(1-\lambda)c+\lambda q,
\qquad 0\leq\lambda\leq1.
\]
Its mean is independent of $\lambda$:
\[
\mu(p^{(\lambda)})
=(1-\lambda)\mu(c)+\lambda\mu(q)
=A_W.
\]
Writing $\mu=A_W$, we obtain
\[
\begin{aligned}
\mathrm{Var}(p^{(\lambda)})
&=\sum_{i=0}^k(i-\mu)^2
   \bigl[(1-\lambda)c_i+\lambda q_i\bigr]\\
&=(1-\lambda)\mathrm{Var}(c)
  +\lambda\mathrm{Var}(q)\\
&=\mathrm{Var}(c)-2\lambda\Delta_2(W),
\end{aligned}
\]
where the last equality follows from
Eq.~\eqref{eq:qam-variance-gap}.
Thus the interpolation preserves the mean checkpoint index.
Its variance decreases linearly when $\Delta_2(W)>0$
and remains constant when $\Delta_2(W)=0$.

\clearpage
\section{Generation and Termination Diagnostics}
\label{app:prelude_output_diagnostics}

We report output-level diagnostics on Prelude Last-40 and
SmolLM3 Last-15. They describe generation behavior on the tested
models.

\subsection{Prelude Last-40}

We first examine the unusually low Prelude Last-40 \WSM{} GSM8K endpoint.

The \WSM{} endpoint exhibits abnormal generation behavior. Only a small fraction of outputs contain the expected final-answer delimiter, while many generations lose the normal line-by-line solution structure and enter repeated 10-gram patterns. The median output length is also much larger than at the paired \QAM{} endpoint, indicating that the model often continues generating instead of terminating with a concise final answer. These observations show that the degradation of WSM at long windows is expressed not only through final-answer correctness.

Recall the interpolation path defined in Section~\ref{sec:degradation-diagnostic}. As $\lambda$ increases toward the \QAM{} endpoint, these symptoms are progressively reduced. The generations recover normal answer formatting and line structure, repeated text becomes much less frequent, and the output length returns to the range typical of well-formed GSM8K solutions. The diagnostics in Table~\ref{tab:prelude_output_diagnostics} quantify these output-level symptoms along the interpolation path.

\begin{table}[H]
\centering
\small
\caption{Output diagnostics along the linear Prelude Last-40 interpolation path on GSM8K. The delimiter column records the fraction of generations containing the standard GSM8K answer delimiter. The no-newline column records outputs without a newline. The loop column records repeated 10-gram behavior. Median length is measured in characters.}
\label{tab:prelude_output_diagnostics}
\begin{tabular}{@{}lcccc@{}}
\toprule
Model & Delimiter & No newline & Loop & Median length \\
\midrule
$\lambda=0$     & 13.7 & 83.3 & 82.8 & 1027 \\
$\lambda=1/2$   & 34.3 & 57.2 & 60.0 & 840  \\
$\lambda=3/4$   & 79.1 & 11.0 & 14.8 & 374  \\
$\lambda=1$     & 91.0 &  0.3 &  4.7 & 298  \\
\bottomrule
\end{tabular}
\end{table}

The following are examples of \WSM{} failures, alongside successful solutions produced by \QAM{} on the same problems.

\paragraph{Example 1 (\texttt{doc\_id} 679; gold 576).}
\emph{If 6 potatoes makes 36 hash browns, how many hash browns can you make out
of 96 potatoes?}

WSM repeats the premise verbatim until truncation and never performs the
division;\:
\begin{small}
\begin{verbatim}
6 potatoes makes 36 hash browns
6 potatoes makes 36 hash browns
...  (28 identical lines, truncated at the 256-token limit)
\end{verbatim}
\end{small}
QAM:
\begin{small}
\begin{verbatim}
6 potatoes makes 36 hash browns
So 1 potato makes 36/6=<<36/6=6>>6
96 potatoes makes 96*6=<<96*6=576>>576
#### 576
\end{verbatim}
\end{small}

\paragraph{Example 2 (\texttt{doc\_id} 164; gold 15).}
\emph{Jackie is trying to decide whether to do her taxes herself or hire an
accountant. If she does the taxes herself, she'll be able to do 3 fewer hours of
freelance work, losing \$35/hour in missed income. The accountant charges \$90.
How much more money will she have if she hires the accountant?}

In this example, WSM computes the first multiplication correctly, but then
repeats the intermediate step until truncation instead of
subtracting the accountant's fee.
\begin{small}
\begin{verbatim}
If she does the taxes herself, she'll be able to do 3 fewer hours of
freelance work, losing $35/hour in missed income.
If she does the taxes herself, she'll be able to do 3*35=<<3*35=105>>105
hours of freelance work.
...  (the same line repeated 7 times, truncated)
\end{verbatim}
\end{small}
QAM reaches the same intermediate value and completes the derivation:
\begin{small}
\begin{verbatim}
... she'll lose 3*35=<<3*35=105>>105 dollars.
The accountant charges $90. So, she'll have 105-90=<<105-90=15>>15
dollars more if she hires the accountant.
#### 15
\end{verbatim}
\end{small}

\paragraph{Example 3 (\texttt{doc\_id} 1207; gold 64).}
\emph{Deandre caught 3 tunas last Monday, the first tuna he caught weighs 56
kilograms, the second tuna he caught weighs 46 kilograms, and the last tuna he
caught weighs 26 kilograms. If a kilogram of tuna costs \$0.50, how much will he
earn after selling all the tunas to the market?}

WSM echoes the question as a single sentence six and a half times before
truncation, and \texttt{flexible-extract} returns the item count \texttt{3}. QAM
prices each fish ($56\times0.5=28$, $46\times0.5=23$, $26\times0.5=13$) and sums
them to \texttt{\#\#\#\# 64}.

\subsection{SmolLM3 Last-15}
\label{app:smol_output_diagnostics}

Table~\ref{tab:smol_output_diagnostics} summarizes the
interpolation endpoint diagnostics. Unlike Prelude, all three
SmolLM3 WSM endpoints have low no-newline rates and median
generation lengths close to those of QAM.

\begin{table}[htbp]
\centering
\small
\caption{SmolLM3 Last-15 diagnostics on the 1,319-question
GSM8K test set.}
\label{tab:smol_output_diagnostics}
\setlength{\tabcolsep}{4pt}
\begin{tabular}{@{}llrrrrr@{}}
\toprule
Profile & Method & Flexible & Strict & No newline & Loop & Median length \\
\midrule
Linear & WSM & 33.4 & 30.6 & 0.8 & 9.9 & 213 \\
Linear & QAM & 43.4 & 43.2 & 0.4 & 3.6 & 221 \\
Cosine & WSM & 37.1 & 35.3 & 0.2 & 5.9 & 233 \\
Cosine & QAM & 45.4 & 44.8 & 0.4 & 2.9 & 252 \\
$1-\sqrt{\cdot}$ & WSM & 32.8 & 29.9 & 1.4 & 11.7 & 218 \\
$1-\sqrt{\cdot}$ & QAM & 41.5 & 41.0 & 0.4 & 3.6 & 231 \\
\bottomrule
\end{tabular}
\end{table}

\clearpage

\section{Maximum-Entropy Diagnostics}
\label{app:matched_second_moment_controls}

These diagnostics compare coefficient vectors at specified index
moments on the same saved checkpoint grid. Their purpose is to test
whether those moments determine downstream scores.
\subsection{Maximum-entropy controls along the interpolation}
\label{app:offpath_controls}

Use zero-based indices $i=0,\ldots,n-1$, with $n=15$ for SmolLM3
and $n=40$ for Prelude. For each of the three profiles and
$\lambda\in\{0.25,0.50,0.75\}$, form
$p^{(\lambda)}=(1-\lambda)c+\lambda q$ and set
\[
\mu=\sum_i i p_i^{(\lambda)},\qquad
v=(1-\lambda)\operatorname{Var}_c(I)+\lambda\operatorname{Var}_q(I).
\]
The control is the maximum-entropy distribution on this finite grid
subject to $\mathbb E_r I=\mu$ and $\operatorname{Var}_r(I)=v$:
\[
r_i=\frac{\exp(a i+b i^2)}{\sum_{j=0}^{n-1}\exp(a j+b j^2)}.
\]
The parameters enforce the two constraints; equivalently, they
minimize the convex function
\[
\log\!\sum_{i=0}^{n-1}\exp(a i+b i^2)-a\mu-b(\mu^2+v).
\]
The gradient gives the residuals in the first and second raw moments,
so this specifies the distribution without task-score fitting.
The target variance is that of the intermediate mixture, generally
larger than the QAM variance at $\lambda=1$. On Prelude, the target
means in one-based indexing are $20.5000$, $25.9615$, and $13.8652$
for linear, cosine, and $1-\sqrt{\cdot}$; subtract one for the
zero-based indices above. Merges use float32 accumulation and bfloat16 export.

Table~\ref{tab:offpath_controls} reports all paired observations.
On SmolLM3, MaxEnt wins six of nine flexible-extraction pairs; under
strict extraction it wins four, loses four, and ties one. On Prelude,
the mixture wins seven of nine flexible and six of nine strict pairs.
The target moments therefore do not specify a unique task score or
ranking. This comparison is a diagnostic, not a causal isolation of
the theoretical truncation error.

\subsection{Separate MaxEnt controls at the QAM endpoint}
\label{app:endpoint-maxent}

For the linear profile on SmolLM3 Last-15 and Prelude Last-40,
separate MaxEnt merges match QAM's index moments.
With zero-based indices, the target pairs $(\mu,v)$ are respectively
\[
\left(7,\frac{112}{45}\right)
\quad\text{and}\quad
\left(19.5,\frac{533}{80}\right).
\]
Each finite-grid MaxEnt distribution uses the same constrained family
defined above with the corresponding target moments, and therefore
satisfies the two-moment conditions in
Theorem~\ref{thm:common-consistency}.

Table~\ref{tab:endpoint-maxent} reports their five-shot GSM8K scores
alongside QAM. The plotted interpolation endpoint at $\lambda=1$ is
QAM itself. The MaxEnt models are separate controls, not additional
points on that interpolation segment.
The comparison only covers this linear profile and benchmark GSM8K.

\begin{table}[H]
\centering\small
\caption{Five-shot GSM8K scores for linear QAM and moment-matched MaxEnt controls.}
\label{tab:endpoint-maxent}
\begin{tabular}{@{}lrrrr@{}}
\toprule
 & \multicolumn{2}{c}{SmolLM3 Last-15} & \multicolumn{2}{c}{Prelude Last-40} \\
\cmidrule(lr){2-3}\cmidrule(l){4-5}
Kernel & Flex & Strict & Flex & Strict \\
\midrule
QAM & 43.37 & 43.21 & 45.41 & 44.66 \\
MaxEnt & 43.75 & 43.52 & 46.10 & 45.34 \\
\bottomrule
\end{tabular}
\end{table}
\section{Additional Performance Comparisons}
\label{app:additional_comparisons}

\subsection{Comparison with Mean Constituent Scores}
\label{app:single_checkpoint_results}
\label{sec:single_checkpoint_envelope}
 
As a complementary comparison, we evaluate \QAM{} against the
average score of the individual checkpoints in each window. As shown in Tables~\ref{tab:stage2_group_vs_envelope} and \ref{tab:prelude_group_vs_envelope}, across two trajectories, three windows, three profiles, and
six capability groups, QAM exceeds this reference in
105 of 108 comparisons.
Single-checkpoint averages are reported to two decimal places.

\begin{table}[H]
\centering
\scriptsize
\caption{SmolLM3 \QAM{} compared with the average single checkpoint.}
\label{tab:stage2_group_vs_envelope}
\setlength{\tabcolsep}{2pt}
\resizebox{\textwidth}{!}{%
\begin{tabular}{lllcccccc}
\toprule
Model & Window & Profile
& Knowledge & Commonsense & Reading & Math & Semantics & Dialogue \\
\midrule
\multirow{12}{*}{SmolLM3}
& \multirow{4}{*}{Last-5}
& Single-checkpoint average
& 58.27 & 58.64 & 76.71 & 34.33 & 63.48 & 69.68 \\
& & Linear
& 60.90 {\tiny(+2.63)}
& 59.86 {\tiny(+1.22)}
& 78.75 {\tiny(+2.04)}
& 41.78 {\tiny(+7.45)}
& 68.09 {\tiny(+4.61)}
& 70.34 {\tiny(+0.66)} \\
& & Cosine
& 60.98 {\tiny(+2.71)}
& 59.85 {\tiny(+1.21)}
& 78.07 {\tiny(+1.36)}
& 41.21 {\tiny(+6.88)}
& 66.30 {\tiny(+2.82)}
& 70.79 {\tiny(+1.11)} \\
& & $1-\sqrt{\cdot}$
& 60.92 {\tiny(+2.65)}
& 59.85 {\tiny(+1.21)}
& 79.27 {\tiny(+2.56)}
& 40.30 {\tiny(+5.97)}
& 68.77 {\tiny(+5.29)}
& 69.90 {\tiny(+0.22)} \\
\cmidrule(lr){2-9}
& \multirow{4}{*}{Last-10}
& Single-checkpoint average
& 58.34 & 58.59 & 76.20 & 34.09 & 62.98 & 69.68 \\
& & Linear
& 61.24 {\tiny(+2.90)}
& 59.69 {\tiny(+1.10)}
& 79.63 {\tiny(+3.43)}
& 43.71 {\tiny(+9.62)}
& 68.40 {\tiny(+5.42)}
& 69.56 {\tiny(-0.12)} \\
& & Cosine
& 61.10 {\tiny(+2.76)}
& 59.78 {\tiny(+1.19)}
& 79.42 {\tiny(+3.22)}
& 41.17 {\tiny(+7.08)}
& 69.14 {\tiny(+6.16)}
& 69.81 {\tiny(+0.13)} \\
& & $1-\sqrt{\cdot}$
& 61.16 {\tiny(+2.82)}
& 59.77 {\tiny(+1.18)}
& 78.65 {\tiny(+2.45)}
& 43.78 {\tiny(+9.69)}
& 67.91 {\tiny(+4.93)}
& 69.96 {\tiny(+0.28)} \\
\cmidrule(lr){2-9}
& \multirow{4}{*}{Last-15}
& Single-checkpoint average
& 58.11 & 58.53 & 76.63 & 33.37 & 62.23 & 69.66 \\
& & Linear
& 61.08 {\tiny(+2.97)}
& 59.53 {\tiny(+1.00)}
& 76.51 {\tiny(-0.12)}
& 43.29 {\tiny(+9.92)}
& 68.46 {\tiny(+6.23)}
& 69.63 {\tiny(-0.03)} \\
& & Cosine
& 61.41 {\tiny(+3.30)}
& 59.83 {\tiny(+1.30)}
& 79.97 {\tiny(+3.34)}
& 45.11 {\tiny(+11.74)}
& 68.77 {\tiny(+6.54)}
& 69.72 {\tiny(+0.06)} \\
& & $1-\sqrt{\cdot}$
& 60.97 {\tiny(+2.86)}
& 59.51 {\tiny(+0.98)}
& 77.65 {\tiny(+1.02)}
& 41.24 {\tiny(+7.87)}
& 67.16 {\tiny(+4.93)}
& 69.95 {\tiny(+0.29)} \\
\bottomrule
\end{tabular}
}
\end{table}
\begin{table}[H]
\centering
\scriptsize
\caption{OpenEuroLLM Prelude \QAM{} compared with the average single checkpoint.}
\label{tab:prelude_group_vs_envelope}
\setlength{\tabcolsep}{1.5pt}
\resizebox{\textwidth}{!}{%
\begin{tabular}{lllcccccc}
\toprule
Model & Window & Profile
& Knowledge & Commonsense & Reading & Math & Semantics & Dialogue \\
\midrule
\multirow{12}{*}{OpenEuroLLM}
& \multirow{4}{*}{Last-10}
& Single-checkpoint average
& 61.16 & 61.94 & 79.52 & 33.16 & 57.56 & 71.65 \\
& & Linear
& 64.55 {\tiny(+3.39)}
& 63.44 {\tiny(+1.50)}
& 81.87 {\tiny(+2.35)}
& 42.80 {\tiny(+9.64)}
& 64.19 {\tiny(+6.63)}
& 72.75 {\tiny(+1.10)} \\
& & Cosine
& 63.86 {\tiny(+2.70)}
& 63.22 {\tiny(+1.28)}
& 81.41 {\tiny(+1.89)}
& 41.43 {\tiny(+8.27)}
& 61.53 {\tiny(+3.97)}
& 72.04 {\tiny(+0.39)} \\
& & $1-\sqrt{\cdot}$
& 64.19 {\tiny(+3.03)}
& 63.42 {\tiny(+1.48)}
& 82.32 {\tiny(+2.80)}
& 42.65 {\tiny(+9.49)}
& 60.22 {\tiny(+2.66)}
& 72.42 {\tiny(+0.77)} \\
\cmidrule(lr){2-9}
& \multirow{4}{*}{Last-20}
& Single-checkpoint average
& 61.09 & 61.94 & 79.66 & 33.37 & 58.65 & 71.42 \\
& & Linear
& 64.06 {\tiny(+2.97)}
& 63.77 {\tiny(+1.83)}
& 82.97 {\tiny(+3.31)}
& 44.47 {\tiny(+11.10)}
& 65.74 {\tiny(+7.09)}
& 72.70 {\tiny(+1.28)} \\
& & Cosine
& 64.46 {\tiny(+3.37)}
& 63.53 {\tiny(+1.59)}
& 82.66 {\tiny(+3.00)}
& 44.13 {\tiny(+10.76)}
& 63.81 {\tiny(+5.16)}
& 72.50 {\tiny(+1.08)} \\
& & $1-\sqrt{\cdot}$
& 64.24 {\tiny(+3.15)}
& 63.65 {\tiny(+1.71)}
& 82.23 {\tiny(+2.57)}
& 42.57 {\tiny(+9.20)}
& 67.97 {\tiny(+9.32)}
& 72.22 {\tiny(+0.80)} \\
\cmidrule(lr){2-9}
& \multirow{4}{*}{Last-40}
& Single-checkpoint average
& 61.05 & 62.00 & 79.47 & 33.57 & 59.34 & 71.60 \\
& & Linear
& 64.27 {\tiny(+3.22)}
& 63.41 {\tiny(+1.41)}
& 82.20 {\tiny(+2.73)}
& 45.03 {\tiny(+11.46)}
& 66.73 {\tiny(+7.39)}
& 72.85 {\tiny(+1.25)} \\
& & Cosine
& 64.39 {\tiny(+3.34)}
& 63.64 {\tiny(+1.64)}
& 82.29 {\tiny(+2.82)}
& 44.47 {\tiny(+10.90)}
& 67.91 {\tiny(+8.57)}
& 72.25 {\tiny(+0.65)} \\
& & $1-\sqrt{\cdot}$
& 64.03 {\tiny(+2.98)}
& 63.42 {\tiny(+1.42)}
& 82.78 {\tiny(+3.31)}
& 42.57 {\tiny(+9.00)}
& 65.12 {\tiny(+5.78)}
& 73.31 {\tiny(+1.71)} \\
\bottomrule
\end{tabular}
}
\end{table}
Notice that the improvements differ by capability in both their size
and consistency across configurations. Knowledge and Commonsense show consistent gains across all
tested windows and profiles, ranging from 2.63 to 3.39
percentage points and from 0.98 to 1.83, respectively.
Semantics also improves in every configuration, although
the magnitude varies more widely, from 2.66 to 9.32 points.

Math shows particularly large gains on both trajectories.
Improvements range from 5.97 to 11.74 percentage points on
SmolLM3 and from 8.27 to 11.46 on Prelude, with gains
present in every tested configuration.

Reading and Dialogue improve in 17 and 16 of the 18
configurations, respectively, with only small observed
regressions.
All three negative differences occur on SmolLM3,
comprising a 0.12-point decrease in Reading and decreases
of 0.12 and 0.03 MRR score points in Dialogue.

This comparison establishes improvements over the mean constituent
score. Appendix~\ref{app:checkpoint-direct} reports direct comparisons with
the latest and task-best constituent checkpoints.

\subsection{Long-window GSM8K scores}
\label{app:long_window_gsm_scores}
The large Math gap on Prelude Last-40 reflects a sharp
decline in \WSM{}'s GSM8K performance.
Table~\ref{tab:prelude-n40-gsm-raw} reports the paired raw scores.
WSM scores only $11.83$--$14.03$ under flexible extraction and
$5.38$--$9.48$ under strict extraction, whereas QAM scores
$42.84$--$45.41$ and $42.30$--$44.66$, respectively.
The individual checkpoints in this window score
$29.4$--$37.7$ under flexible extraction.
Across all three profiles, \WSM{} falls below every constituent checkpoint, while \QAM{} exceeds all of them under this metric.

\begin{table}[H]
\centering
\small
\caption{GSM8K scores for Prelude Last-40 under flexible
and strict answer extraction.}
\label{tab:prelude-n40-gsm-raw}
\vspace{2pt}
\setlength{\tabcolsep}{10pt}
\begin{tabular}{lcccccc}
\toprule
& \multicolumn{2}{c}{Linear}
& \multicolumn{2}{c}{Cosine}
& \multicolumn{2}{c}{$1-\sqrt{\cdot}$} \\
\cmidrule(lr){2-3}
\cmidrule(lr){4-5}
\cmidrule(lr){6-7}
Metric
& \WSM{} & \QAM{}
& \WSM{} & \QAM{}
& \WSM{} & \QAM{} \\
\midrule
GSM8K flexible
& 11.83 & \textbf{45.41}
& 14.03 & \textbf{44.66}
& 12.66 & \textbf{42.84} \\
GSM8K strict
& 5.38 & \textbf{44.66}
& 9.48 & \textbf{44.28}
& 7.13 & \textbf{42.30} \\
\addlinespace[0.2em]
flexible $-$ strict
& 6.44 & 0.76
& 4.55 & 0.38
& 5.53 & 0.53 \\
\bottomrule
\end{tabular}
\end{table}

\subsection{Task-wise best-observed comparisons}
\label{app:best_observed_tasks}

Table~\ref{tab:best_observed_tasks} makes the cross-window comparison
explicit. Each task is considered separately: WSM may use any of
the three windows and three profiles, while QAM may use either the
intermediate or long window and any of the three profiles.
Different rows can therefore select different merged models.
These are best observed scores over the evaluated grid, and different
tasks may select different configurations.

\begin{table}[H]
\centering
\small
\caption{Task-wise best observed scores. WSM uses all tested windows;
QAM uses intermediate and long windows. Scores are percentages, except MuTual
which is $100\times\mathrm{MRR}$.}
\label{tab:best_observed_tasks}
\setlength{\tabcolsep}{3pt}
\begin{tabular}{@{}lrrr rrr@{}}
\toprule
& \multicolumn{3}{c}{SmolLM3} & \multicolumn{3}{c}{Prelude}\\
\cmidrule(lr){2-4}\cmidrule(l){5-7}
Task / metric & WSM & QAM & $\Delta$ & WSM & QAM & $\Delta$\\
\midrule
ARC-Challenge & 47.61 & 48.12 & +0.51 & 50.94 & 50.68 & -0.26 \\
ARC-Easy & 79.46 & 79.84 & +0.38 & 80.60 & 80.85 & +0.25 \\
BoolQ & 79.17 & 79.97 & +0.80 & 83.06 & 82.97 & -0.09 \\
COPA & 87.00 & 88.00 & +1.00 & 92.00 & 92.00 & 0.00 \\
HellaSwag & 54.89 & 55.20 & +0.31 & 58.97 & 59.28 & +0.31 \\
OpenBookQA & 35.80 & 34.80 & -1.00 & 35.80 & 35.60 & -0.20 \\
PIQA & 78.62 & 78.35 & -0.27 & 79.76 & 80.58 & +0.82 \\
SciQ & 95.60 & 95.40 & -0.20 & 95.90 & 96.00 & +0.10 \\
SST-2 & 85.78 & 85.89 & +0.11 & 84.52 & 83.94 & -0.58 \\
WiC & 51.25 & 50.16 & -1.09 & 50.31 & 50.78 & +0.47 \\
WinoGrande & 69.93 & 69.06 & -0.87 & 73.16 & 74.27 & +1.11 \\
WSC & 47.12 & 47.12 & 0.00 & 36.54 & 42.31 & +5.77 \\
MMLU (5-shot) & 57.80 & 57.99 & +0.19 & 62.31 & 61.88 & -0.43 \\
GSM8K flexible & 42.23 & 45.41 & +3.18 & 43.06 & 45.41 & +2.35 \\
GSM8K strict & 41.24 & 44.81 & +3.57 & 42.76 & 44.66 & +1.90 \\
MuTual & 70.65 & 69.96 & -0.69
       & 72.79 & 73.31 & +0.52\\
\bottomrule
\end{tabular}
\end{table}

GSM8K counts as one task as flexible and strict extraction give the
same win/tie/loss classification. Including MuTual, the totals are
8 wins, 1 tie, and 6 losses for SmolLM3, and 9 wins, 1 tie, and
5 losses for Prelude. Several positive differences are small;
these counts describe direction without establishing statistical
significance or a universal advantage.

For SmolLM3, the best WSM MuTual score is $70.65$ at Last-5
cosine, while the best intermediate/long QAM score is $69.96$
at Last-10 $1-\sqrt{\cdot}$. For Prelude, the best WSM score is
$72.79$ at Last-10 cosine, while the best intermediate/long QAM
score is $73.31$ at Last-40 $1-\sqrt{\cdot}$.

\clearpage

\section{Comparison with Individual Checkpoints}
\label{app:checkpoint-comparisons}
\label{app:checkpoint-direct}

We compare all 18 QAM configurations with individual checkpoints
across the 15 evaluation tasks. Each merge is compared with every
checkpoint in its own window, the latest checkpoint, and the
highest-scoring checkpoint for each task within that window.
The latest checkpoint is shared across windows within each trajectory.
The task-wise best checkpoint may differ across tasks.

Scores use a 0--100 scale, including MuTual MRR multiplied by 100.
GSM8K averages flexible and strict extraction and counts as one task.
Wins, ties, and losses are determined after rounding both scores
to two decimal places.

Table~\ref{tab:checkpoint-grid} summarizes all three windows and
all three profiles for each trajectory. Across this complete grid,
QAM wins 1092 of 1350 task--checkpoint comparisons on SmolLM3
(80.9\%) and 2716 of 3150 on Prelude (86.2\%).
Across the 270 task--configuration comparisons, QAM beats the latest
checkpoint in 219, ties in 14, and loses in 37.
Against the task-wise best checkpoint within each window,
it wins 147, ties 5, and loses 118.
Tables~\ref{tab:checkpoint-smollm3} and~\ref{tab:checkpoint-prelude}
report the task scores and constituent-checkpoint comparisons
for all 18 configurations.

\begin{table}[htbp]
\centering
\small
\caption{QAM versus individual checkpoints, broken down by window and profile.}
\label{tab:checkpoint-grid}
\setlength{\tabcolsep}{4pt}
\begin{tabular}{@{}lllrrr@{}}
\toprule
Trajectory & Window & Profile & All & Latest & Best \\
\midrule
SmolLM3 & Last-5 & Linear & 68/0/7 & 14/0/1 & 12/0/3 \\
SmolLM3 & Last-5 & Cosine & 66/2/7 & 14/0/1 & 11/1/3 \\
SmolLM3 & Last-5 & $1-\sqrt{\cdot}$ & 58/5/12 & 10/2/3 & 8/1/6 \\
SmolLM3 & Last-10 & Linear & 116/5/29 & 10/1/4 & 7/0/8 \\
SmolLM3 & Last-10 & Cosine & 123/4/23 & 12/0/3 & 7/0/8 \\
SmolLM3 & Last-10 & $1-\sqrt{\cdot}$ & 116/7/27 & 9/1/5 & 6/1/8 \\
SmolLM3 & Last-15 & Linear & 174/9/42 & 10/1/4 & 6/0/9 \\
SmolLM3 & Last-15 & Cosine & 187/7/31 & 12/0/3 & 7/0/8 \\
SmolLM3 & Last-15 & $1-\sqrt{\cdot}$ & 184/9/32 & 11/0/4 & 6/0/9 \\
\midrule
\multicolumn{3}{l}{SmolLM3 total}
& 1092/48/210 & 102/5/28 & 70/3/62 \\
\midrule
Prelude & Last-10 & Linear & 137/3/10 & 14/1/0 & 10/0/5 \\
Prelude & Last-10 & Cosine & 128/5/17 & 12/2/1 & 8/0/7 \\
Prelude & Last-10 & $1-\sqrt{\cdot}$ & 134/2/14 & 13/0/2 & 9/1/5 \\
Prelude & Last-20 & Linear & 251/19/30 & 13/1/1 & 10/0/5 \\
Prelude & Last-20 & Cosine & 257/3/40 & 13/1/1 & 9/0/6 \\
Prelude & Last-20 & $1-\sqrt{\cdot}$ & 261/11/28 & 13/1/1 & 8/0/7 \\
Prelude & Last-40 & Linear & 500/41/59 & 12/2/1 & 7/1/7 \\
Prelude & Last-40 & Cosine & 548/16/36 & 13/1/1 & 9/0/6 \\
Prelude & Last-40 & $1-\sqrt{\cdot}$ & 500/21/79 & 14/0/1 & 7/0/8 \\
\midrule
\multicolumn{3}{l}{Prelude total}
& 2716/121/313 & 117/9/9 & 77/2/56 \\
\midrule
\multicolumn{3}{l}{Combined total}
& 3808/169/523 & 219/14/37 & 147/5/118 \\
\bottomrule
\end{tabular}
\end{table}

\begin{table}[p]
\centering
\small
\setlength{\tabcolsep}{3pt}
\caption{SmolLM3 QAM versus individual checkpoints
for all windows and profiles.}
\label{tab:checkpoint-smollm3}
\begin{tabular}{@{}lrr*{3}{rr}@{}}
\toprule
& & & \multicolumn{2}{c}{Linear}
& \multicolumn{2}{c}{Cosine}
& \multicolumn{2}{c}{$1-\sqrt{\cdot}$} \\
\cmidrule(lr){4-5}\cmidrule(lr){6-7}\cmidrule(l){8-9}
Task & Latest & Best & QAM & W/T/L & QAM & W/T/L & QAM & W/T/L \\
\midrule
\multicolumn{9}{c}{\textbf{Last-5}} \\
\midrule
ARC-Challenge & 45.39 & 46.59 & 47.01 & 5/0/0 & 47.10 & 5/0/0 & 47.35 & 5/0/0 \\
ARC-Easy & 77.69 & 77.78 & 79.00 & 5/0/0 & 79.50 & 5/0/0 & 79.21 & 5/0/0 \\
OpenBookQA & 33.60 & 34.40 & 36.00 & 5/0/0 & 36.40 & 5/0/0 & 34.40 & 4/1/0 \\
SciQ & 95.00 & 95.10 & 95.40 & 5/0/0 & 95.50 & 5/0/0 & 95.00 & 3/1/1 \\
MMLU & 54.69 & 54.98 & 57.42 & 5/0/0 & 57.42 & 5/0/0 & 57.48 & 5/0/0 \\
HellaSwag & 53.84 & 53.93 & 55.07 & 5/0/0 & 55.06 & 5/0/0 & 55.13 & 5/0/0 \\
PIQA & 77.80 & 77.91 & 78.02 & 5/0/0 & 78.07 & 5/0/0 & 78.24 & 5/0/0 \\
COPA & 86.00 & 89.00 & 87.00 & 3/0/2 & 88.00 & 3/1/1 & 88.00 & 3/1/1 \\
WinoGrande & 68.35 & 69.14 & 69.30 & 5/0/0 & 69.14 & 4/1/0 & 68.35 & 2/1/2 \\
BoolQ & 77.95 & 77.95 & 78.75 & 5/0/0 & 78.07 & 5/0/0 & 79.27 & 5/0/0 \\
GSM8K & 31.05 & 36.51 & 41.78 & 5/0/0 & 41.21 & 5/0/0 & 40.30 & 5/0/0 \\
SST-2 & 65.60 & 84.63 & 83.37 & 4/0/1 & 80.05 & 3/0/2 & 84.86 & 5/0/0 \\
WiC & 50.78 & 50.78 & 51.10 & 5/0/0 & 51.25 & 5/0/0 & 49.84 & 1/1/3 \\
WSC & 67.31 & 67.31 & 44.23 & 1/0/4 & 43.27 & 1/0/4 & 50.00 & 2/0/3 \\
MuTual & 70.33 & 70.33 & 70.34 & 5/0/0 & 70.79 & 5/0/0 & 69.90 & 3/0/2 \\
\midrule
\multicolumn{9}{c}{\textbf{Last-10}} \\
\midrule
ARC-Challenge & 45.39 & 46.59 & 48.04 & 10/0/0 & 47.44 & 10/0/0 & 47.61 & 10/0/0 \\
ARC-Easy & 77.69 & 78.24 & 79.84 & 10/0/0 & 79.00 & 10/0/0 & 79.71 & 10/0/0 \\
OpenBookQA & 33.60 & 35.80 & 34.60 & 9/0/1 & 34.80 & 9/0/1 & 34.20 & 7/1/2 \\
SciQ & 95.00 & 95.40 & 95.20 & 8/1/1 & 95.10 & 7/1/2 & 95.40 & 9/1/0 \\
MMLU & 54.69 & 55.96 & 57.72 & 10/0/0 & 57.72 & 10/0/0 & 57.67 & 10/0/0 \\
HellaSwag & 53.84 & 53.94 & 54.98 & 10/0/0 & 55.05 & 10/0/0 & 55.20 & 10/0/0 \\
PIQA & 77.80 & 77.91 & 78.24 & 10/0/0 & 78.02 & 10/0/0 & 77.69 & 8/0/2 \\
COPA & 86.00 & 89.00 & 86.00 & 2/3/5 & 88.00 & 6/2/2 & 86.00 & 2/3/5 \\
WinoGrande & 68.35 & 69.14 & 68.03 & 5/0/5 & 68.59 & 8/0/2 & 67.96 & 5/0/5 \\
BoolQ & 77.95 & 77.95 & 79.63 & 10/0/0 & 79.42 & 10/0/0 & 78.65 & 10/0/0 \\
GSM8K & 31.05 & 36.51 & 43.71 & 10/0/0 & 41.17 & 10/0/0 & 43.78 & 10/0/0 \\
SST-2 & 65.60 & 87.16 & 84.75 & 9/0/1 & 85.89 & 9/0/1 & 83.37 & 8/0/2 \\
WiC & 50.78 & 50.78 & 49.84 & 4/1/5 & 49.84 & 4/1/5 & 50.16 & 7/2/1 \\
WSC & 67.31 & 67.31 & 45.19 & 4/0/6 & 47.12 & 4/0/6 & 47.12 & 4/0/6 \\
MuTual & 70.33 & 70.33 & 69.56 & 5/0/5 & 69.81 & 6/0/4 & 69.96 & 6/0/4 \\
\midrule
\multicolumn{9}{c}{\textbf{Last-15}} \\
\midrule
ARC-Challenge & 45.39 & 46.59 & 47.18 & 15/0/0 & 47.87 & 15/0/0 & 48.12 & 15/0/0 \\
ARC-Easy & 77.69 & 78.24 & 79.84 & 15/0/0 & 79.80 & 15/0/0 & 79.71 & 15/0/0 \\
OpenBookQA & 33.60 & 35.80 & 34.60 & 13/0/2 & 34.60 & 13/0/2 & 34.80 & 13/0/2 \\
SciQ & 95.00 & 95.60 & 95.10 & 8/2/5 & 95.10 & 8/2/5 & 95.20 & 10/2/3 \\
MMLU & 54.69 & 55.96 & 57.58 & 15/0/0 & 57.99 & 15/0/0 & 57.36 & 15/0/0 \\
HellaSwag & 53.84 & 53.94 & 54.76 & 15/0/0 & 55.09 & 15/0/0 & 54.58 & 15/0/0 \\
PIQA & 77.80 & 77.91 & 77.97 & 15/0/0 & 77.97 & 15/0/0 & 78.35 & 15/0/0 \\
COPA & 86.00 & 89.00 & 86.00 & 5/3/7 & 87.00 & 8/3/4 & 87.00 & 8/3/4 \\
WinoGrande & 68.35 & 69.14 & 68.51 & 12/1/2 & 68.98 & 14/0/1 & 69.06 & 14/0/1 \\
BoolQ & 77.95 & 79.20 & 76.51 & 6/0/9 & 79.97 & 15/0/0 & 77.65 & 13/0/2 \\
GSM8K & 31.05 & 36.51 & 43.29 & 15/0/0 & 45.11 & 15/0/0 & 41.24 & 15/0/0 \\
SST-2 & 65.60 & 87.16 & 84.98 & 14/0/1 & 85.09 & 14/0/1 & 83.14 & 13/0/2 \\
WiC & 50.78 & 50.78 & 50.16 & 11/3/1 & 50.00 & 9/2/4 & 50.16 & 11/3/1 \\
WSC & 67.31 & 67.31 & 42.31 & 6/0/9 & 47.12 & 7/0/8 & 37.50 & 2/1/12 \\
MuTual & 70.33 & 70.33 & 69.63 & 9/0/6 & 69.72 & 9/0/6 & 69.95 & 10/0/5 \\
\bottomrule
\end{tabular}
\end{table}

\begin{table}[p]
\centering
\small
\setlength{\tabcolsep}{3pt}
\caption{Prelude QAM versus individual checkpoints
for all windows and profiles.}
\label{tab:checkpoint-prelude}
\begin{tabular}{@{}lrr*{3}{rr}@{}}
\toprule
& & & \multicolumn{2}{c}{Linear}
& \multicolumn{2}{c}{Cosine}
& \multicolumn{2}{c}{$1-\sqrt{\cdot}$} \\
\cmidrule(lr){4-5}\cmidrule(lr){6-7}\cmidrule(l){8-9}
Task & Latest & Best & QAM & W/T/L & QAM & W/T/L & QAM & W/T/L \\
\midrule
\multicolumn{9}{c}{\textbf{Last-10}} \\
\midrule
ARC-Challenge & 47.61 & 48.46 & 50.77 & 10/0/0 & 49.23 & 10/0/0 & 49.91 & 10/0/0 \\
ARC-Easy & 79.08 & 79.50 & 80.68 & 10/0/0 & 80.30 & 10/0/0 & 80.26 & 10/0/0 \\
OpenBookQA & 31.80 & 33.20 & 33.80 & 10/0/0 & 35.00 & 10/0/0 & 34.80 & 10/0/0 \\
SciQ & 95.50 & 95.90 & 95.60 & 7/0/3 & 95.50 & 6/1/3 & 95.40 & 5/1/4 \\
MMLU & 58.82 & 58.82 & 61.86 & 10/0/0 & 61.07 & 10/0/0 & 61.49 & 10/0/0 \\
HellaSwag & 57.33 & 58.03 & 59.16 & 10/0/0 & 58.92 & 10/0/0 & 59.13 & 10/0/0 \\
PIQA & 78.51 & 79.38 & 79.60 & 10/0/0 & 79.43 & 10/0/0 & 79.60 & 10/0/0 \\
COPA & 90.00 & 93.00 & 90.00 & 5/3/2 & 90.00 & 5/3/2 & 91.00 & 8/0/2 \\
WinoGrande & 68.35 & 71.74 & 71.82 & 10/0/0 & 71.67 & 9/0/1 & 71.74 & 9/1/0 \\
BoolQ & 79.05 & 80.86 & 81.87 & 10/0/0 & 81.41 & 10/0/0 & 82.32 & 10/0/0 \\
GSM8K & 33.85 & 35.22 & 42.80 & 10/0/0 & 41.43 & 10/0/0 & 42.65 & 10/0/0 \\
SST-2 & 62.50 & 79.70 & 74.66 & 9/0/1 & 72.02 & 9/0/1 & 69.27 & 9/0/1 \\
WiC & 51.72 & 56.58 & 52.66 & 8/0/2 & 50.47 & 3/1/6 & 51.25 & 5/0/5 \\
WSC & 36.54 & 59.62 & 47.12 & 8/0/2 & 41.35 & 8/0/2 & 39.42 & 8/0/2 \\
MuTual & 71.80 & 72.24 & 72.75 & 10/0/0 & 72.04 & 8/0/2 & 72.42 & 10/0/0 \\
\midrule
\multicolumn{9}{c}{\textbf{Last-20}} \\
\midrule
ARC-Challenge & 47.61 & 48.72 & 50.68 & 20/0/0 & 49.66 & 20/0/0 & 48.55 & 19/0/1 \\
ARC-Easy & 79.08 & 79.50 & 80.43 & 20/0/0 & 80.85 & 20/0/0 & 80.13 & 20/0/0 \\
OpenBookQA & 31.80 & 34.40 & 33.80 & 18/1/1 & 35.60 & 20/0/0 & 34.00 & 19/0/1 \\
SciQ & 95.50 & 96.30 & 95.80 & 13/4/3 & 95.50 & 9/1/10 & 95.80 & 13/4/3 \\
MMLU & 58.82 & 58.82 & 61.23 & 20/0/0 & 61.74 & 20/0/0 & 61.69 & 20/0/0 \\
HellaSwag & 57.33 & 58.03 & 59.28 & 20/0/0 & 59.23 & 20/0/0 & 59.22 & 20/0/0 \\
PIQA & 78.51 & 79.54 & 80.25 & 20/0/0 & 79.60 & 20/0/0 & 80.14 & 20/0/0 \\
COPA & 90.00 & 93.00 & 92.00 & 17/1/2 & 91.00 & 17/0/3 & 90.00 & 10/7/3 \\
WinoGrande & 68.35 & 72.69 & 73.24 & 20/0/0 & 72.14 & 18/0/2 & 72.77 & 20/0/0 \\
BoolQ & 79.05 & 80.86 & 82.97 & 20/0/0 & 82.66 & 20/0/0 & 82.23 & 20/0/0 \\
GSM8K & 33.85 & 35.52 & 44.47 & 20/0/0 & 44.13 & 20/0/0 & 42.57 & 20/0/0 \\
SST-2 & 62.50 & 80.39 & 80.62 & 20/0/0 & 76.83 & 17/0/3 & 83.94 & 20/0/0 \\
WiC & 51.72 & 57.99 & 50.16 & 3/3/14 & 50.31 & 6/0/14 & 50.63 & 7/0/13 \\
WSC & 36.54 & 59.62 & 36.54 & 0/10/10 & 37.50 & 10/2/8 & 40.38 & 14/0/6 \\
MuTual & 71.80 & 72.24 & 72.70 & 20/0/0 & 72.50 & 20/0/0 & 72.22 & 19/0/1 \\
\midrule
\multicolumn{9}{c}{\textbf{Last-40}} \\
\midrule
ARC-Challenge & 47.61 & 49.15 & 49.15 & 39/1/0 & 48.81 & 39/0/1 & 47.95 & 30/1/9 \\
ARC-Easy & 79.08 & 79.63 & 79.59 & 39/0/1 & 79.88 & 40/0/0 & 79.92 & 40/0/0 \\
OpenBookQA & 31.80 & 34.40 & 33.20 & 34/3/3 & 34.80 & 40/0/0 & 33.40 & 37/0/3 \\
SciQ & 95.50 & 96.30 & 96.00 & 37/2/1 & 95.90 & 34/3/3 & 95.60 & 20/5/15 \\
MMLU & 58.82 & 59.00 & 61.79 & 40/0/0 & 61.88 & 40/0/0 & 61.53 & 40/0/0 \\
HellaSwag & 57.33 & 58.20 & 58.88 & 40/0/0 & 59.12 & 40/0/0 & 58.82 & 40/0/0 \\
PIQA & 78.51 & 80.09 & 80.03 & 38/0/2 & 80.58 & 40/0/0 & 79.54 & 34/1/5 \\
COPA & 90.00 & 93.00 & 90.00 & 20/11/9 & 90.00 & 20/11/9 & 91.00 & 31/3/6 \\
WinoGrande & 68.35 & 72.69 & 73.09 & 40/0/0 & 72.85 & 40/0/0 & 74.27 & 40/0/0 \\
BoolQ & 79.05 & 81.25 & 82.20 & 40/0/0 & 82.29 & 40/0/0 & 82.78 & 40/0/0 \\
GSM8K & 33.85 & 37.57 & 45.03 & 40/0/0 & 44.47 & 40/0/0 & 42.57 & 40/0/0 \\
SST-2 & 62.50 & 80.50 & 82.45 & 40/0/0 & 83.49 & 40/0/0 & 79.36 & 36/0/4 \\
WiC & 51.72 & 57.99 & 50.16 & 13/5/22 & 50.78 & 22/2/16 & 50.16 & 13/5/22 \\
WSC & 36.54 & 59.62 & 36.54 & 0/19/21 & 42.31 & 34/0/6 & 37.50 & 19/6/15 \\
MuTual & 71.80 & 72.37 & 72.85 & 40/0/0 & 72.25 & 39/0/1 & 73.31 & 40/0/0 \\
\bottomrule
\end{tabular}
\end{table}
\clearpage
\section{Individual-Checkpoint Results}\label{app:full results}

Figures~\ref{fig:single ckpt score smollm3} and
\ref{fig:single ckpt score prelude} show the individual-checkpoint
benchmark scores on the two trajectories, including MuTual MRR. Window-level mean scores and
the corresponding \QAM{} results are reported in
Tables~\ref{tab:stage2_group_vs_envelope} and
\ref{tab:prelude_group_vs_envelope};
Table~\ref{tab:best_observed_tasks} compares the best observed merge
scores across the main evaluation suite.

\begin{figure}[htbp]
    \centering
    \includegraphics[width=\textwidth,height=0.79\textheight,keepaspectratio]{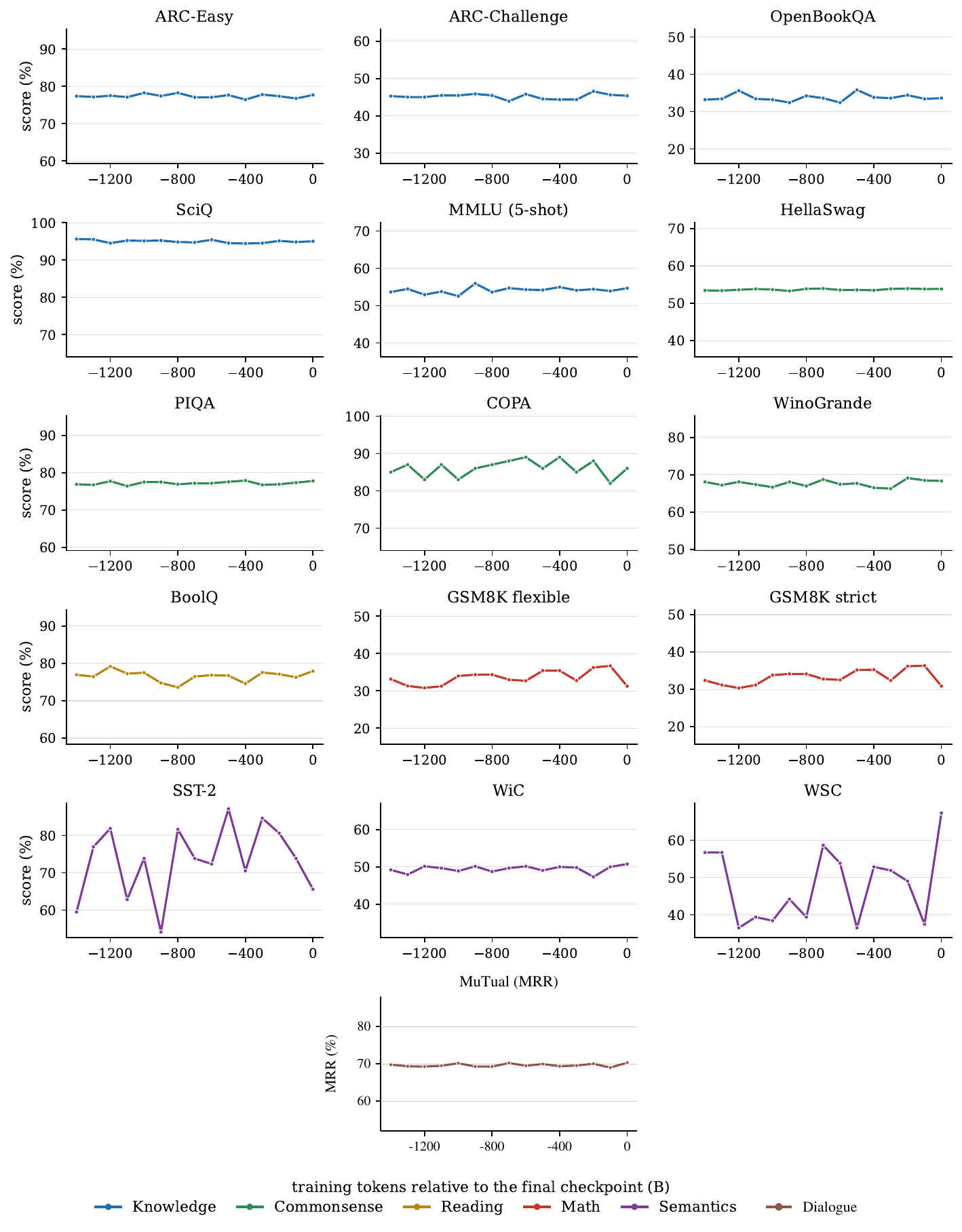}
    \caption{Individual-checkpoint benchmark scores for SmolLM3.}
    \label{fig:single ckpt score smollm3}
\end{figure}

\begin{figure}[htbp]
    \centering
    \includegraphics[width=\textwidth,height=0.79\textheight,keepaspectratio]{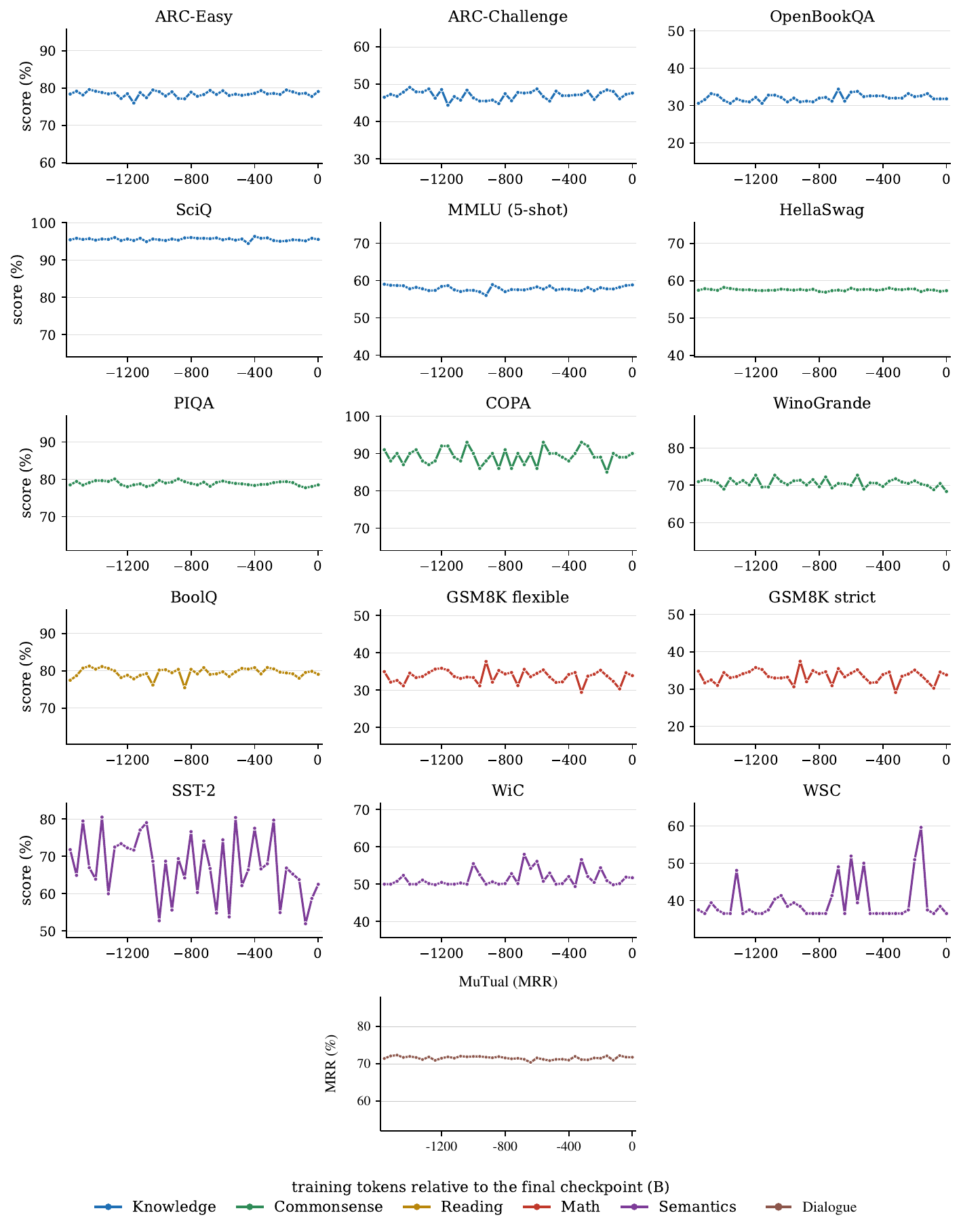}
    \caption{Individual-checkpoint benchmark scores for OpenEuroLLM Prelude.}
    \label{fig:single ckpt score prelude}
\end{figure}

\end{document}